\newcommand{\Rmnum}[1]{\expandafter\@slowromancap\romannumeral #1@}
\DeclareMathOperator*{\argmax}{arg\,max}
\renewcommand{\algorithmcfname}{Algorithm}
\newcommand{\R}{\mathbb{R}}
\newcommand{\E}{\mathbb{E}}
\newcommand{\indfunc}{\ensuremath{\mathbf{1}}}
\newcommand{\bigObound}{\ensuremath{\mathcal{O}}}
\newcommand{\smallObound}{\ensuremath{o}}
\newcommand{\Thetabound}{\ensuremath{\Theta}}
\newcommand{\hedge}{\textsc{Hedge}\xspace}
\newcommand{\expthree}{\textsc{EXP3}\xspace}
\newcommand{\expfour}{\textsc{EXP4}\xspace}
\newcommand{\algo}{\textsc{LearnExp}\xspace}
\newcommand{\generalalgo}{\textsc{Algo}\xspace}
\newcommand{\adv}{\textsc{Adv}\xspace}
\newcommand{\expert}{\textsc{Exp}\xspace}
\newcommand{\num}{\ensuremath{N}}
\newcommand{\actionSet}{\ensuremath{\mathcal{A}}}
\newcommand{\action}{\ensuremath{a}}
\newcommand{\contextSet}{\ensuremath{\mathcal{X}}}
\newcommand{\context}{\ensuremath{x}}
\newcommand{\paramSet}{\ensuremath{S}}
\newcommand{\param}{\ensuremath{w}}
\newcommand{\regretRate}{\ensuremath{\beta}}
\newcommand{\smoothRate}{\ensuremath{\alpha}}
\newcommand{\loss}{\ensuremath{l}}
\newcommand{\feedback}{\ensuremath{f}}
\newcommand{\Time}{\ensuremath{T}}
\newcommand{\omdT}{\ensuremath{T}}
\newcommand{\lossmax}{\ensuremath{l_{max}}}
\newcommand{\sequence}{\ensuremath{\mathcal{S}}}
\newcommand{\sequencelen}{\ensuremath{|\mathcal{S}|}}
\newcommand{\historySet}{\ensuremath{\mathcal{H}}}
\newcommand{\history}{\ensuremath{h}}
\newcommand{\regret}{\textsc{Reg}\xspace}
\newcommand{\norm}[1]{\left\lVert#1\right\rVert}
\title[Learning to Use Learners' Advice]{Learning to Use Learners' Advice}
\begin{document}
\maketitle


\begin{abstract}
In this paper, we study a variant of the framework of online learning using expert advice with limited/bandit feedback. We consider each expert as a learning entity, seeking to more accurately reflecting certain real-world applications. In our setting, the feedback at any time $t$ is limited in a sense that it is only available to the expert $i^t$ that has been selected by the central algorithm (forecaster), \emph{i.e.}, only the expert $i^t$ receives feedback from the environment and gets to learn at time $t$. We consider a generic black-box approach whereby the forecaster does not control or know the learning dynamics of the experts apart from knowing the following no-regret learning property:  the average regret of any expert $j$ vanishes at a rate of at least $\bigObound(t_j^{\regretRate-1})$ with $t_j$ learning steps where $\regretRate \in [0, 1]$ is a parameter.  

In the spirit of competing against the best action in hindsight in multi-armed bandits problem, our goal here is to be competitive w.r.t. the cumulative losses the algorithm could receive by following the policy of always selecting one expert. We prove the following hardness result:  without any coordination between the forecaster and the experts, it is impossible to design a forecaster achieving no-regret guarantees. In order to circumvent this hardness result, we consider a practical assumption allowing the forecaster to ``guide" the learning process of the experts by filtering/blocking some of the feedbacks observed by them from the environment, \emph{i.e.}, not allowing the selected expert $i^t$ to learn at time $t$ for some time steps. Then, we design a novel no-regret learning algorithm \algo for this problem setting by carefully guiding the feedbacks observed by experts. We prove that \algo achieves the worst-case expected cumulative regret of $\bigObound(\Time^\frac{1}{2 - \regretRate})$ after $\Time$ time steps and matches the regret bound of $\Thetabound(\Time^\frac{1}{2})$ for the special case of multi-armed bandits.
\end{abstract}

\vspace{-2mm}
\section{Introduction}\label{sec.introduction}
Many real-world applications involve repeatedly making decisions under uncertainty---for instance, choosing one of the several items to recommend to the user, dynamically allocating resources among available stock options in a financial market, or sequentially deciding the next medical test in healthcare.  Furthermore, the feedback is often limited in these settings in a sense that only the loss/reward associated with the action taken by the system is observed, referred to as the bandit feedback setting. Online learning using expert advice with bandit/limited feedback is a well-studied framework to model the above-mentioned application settings \citep{freund1995desicion,auer2002nonstochastic,cesa2006prediction,bubeck2012regret} and addresses the fundamental question of how a learning algorithm should trade-off exploration (the cost of acquiring new information) versus exploitation (acting greedily based on current information to minimize instantaneous losses). In this paper, we investigate this framework with an important practical consideration: 
\begin{quote}
\emph{How do we use the advice of experts when they themselves are learning entities?}
\end{quote}

\subsection{Motivating Applications}
Modeling experts as learning entities realistically captures many practical scenarios of how one would define/encounter these experts in real-world applications, such as seeking advice from fellow players or friends, aggregating prediction recommendations from trading agents or different marketplaces, product testing with human participants who might adapt over time, information acquisition from crowdsourcing participants who might learn over time, the problem of meta-learning and hyperparameter tuning whereby different learning algorithms are treated as experts (\emph{cf.} \citet{baram2004online,hsu2015active}), and many more. 

As a concrete running example, we consider the problem of learning to offer personalized deals / discount coupons to users enabling new businesses to incentivize and attract more customers \citep{edelman2011groupon,singla16hemimetric}. An emerging trend is \emph{deal-aggregator} sites like \emph{Yipit}\footnote{\url{http://yipit.com/}; \url{http://www.groupon.com/}; \url{https://livingsocial.com/}\label{footnote.1}} providing personalized coupon recommendation services to their users by aggregating and selecting coupons from \emph{daily-deal} marketplaces like \emph{Groupon} and \emph{LivingSocial}\textsuperscript{\ref{footnote.1}}. One of the primary goals of these recommendation systems like \emph{Yipit} (corresponding to the central algorithm / forecaster in our setting) is to design better selection strategies for choosing coupons from different marketplaces (corresponding to the experts in our setting). However, these marketplaces (experts) themselves would be learning to optimize the coupons to offer, for instance, the discount price or the type of the coupon based on historic interactions with users \citep{edelman2011groupon}.

\subsection{Experts as Learning Entities: Challenges and Our Results}
We now provide an overview of our approach, the main challenges in designing a forecaster with no-regret guarantees, and our results. 

{\bfseries The interaction model.}
We consider an online setting similar to that of adversarial online learning using experts' advice with bandit feedback \citep{auer2002nonstochastic}. However, to keep the presentation more general (\emph{e.g.}, we do not necessarily require that the sets of actions are shared across different experts), the forecaster selects / seeks advice from only one expert $i^t$ at any time $t$ (\emph{cf.} \citet{kale2014multiarmed} for more discussion about this aspect). More specifically, at time $t$, the forecaster selects an expert $i^t$, performs an action $a^t_{i^t}$ recommended by the expert $i^t$, and incurs a loss $l^t(a^t_{i^t})$ set by the adversary.

{\bfseries The notion of regret.}
In the standard framework, \emph{i.e.}, when the experts are not learning entities, the \expthree algorithm \citep{auer2002nonstochastic} is well-suited for this problem setting achieving the optimal regret bounds. However, it is important to note that the classical notion of \emph{external} regret used in the literature (\emph{cf.} \citet{auer2002nonstochastic, cesa2006prediction, bubeck2012regret}) does not provide any meaningful guarantees in our setting in terms of competing against the ``best" expert. Similar to the notion of competing against the best action in hindsight in multi-armed bandits problem, we want to be competitive w.r.t. the cumulative losses the algorithm could receive by following the policy of always selecting one expert (\emph{cf.} Section~\ref{sec.model.objective} for a formal definition).

{\bfseries Experts as no-regret learners and blackbox approach.} In our setting, the experts themselves are learning entities. Formally, we assume that the experts are no-regret learners, \emph{i.e.}, the average regret of any expert $j$ vanishes at a rate of \emph{at least} $\bigObound(t_j^{\regretRate-1})$ with $t_j$ learning steps where $\regretRate \in [0, 1]$ is a parameter known to the forecaster.
We consider the following natural notion of bandit/limited feedback: only the selected expert $i^t$ receives feedback and gets to learn at time $t$; all other experts that have not been selected at time $t$ experience no change in their learning state at this time. We consider a generic black-box approach in which the forecaster does not know and cannot control the internal learning dynamics of the experts.

{\bfseries Challenges and hardness result.}
It turns out that modeling these experts as learning entities leads to a challenging twist in this well-studied and foundational online learning framework. In this paper, we prove the following hardness result for our problem setting:  without any coordination between the forecaster and the experts, it is impossible to design a forecaster achieving no-regret guarantees in the worst-case. Somewhat surprisingly, this hardness result holds when playing against an oblivious (non-adaptive) adversary and even if restricting the experts to be implementing some well-studied online learning algorithms, for instance, the \hedge algorithm \citep{freund1995desicion}. The fundamental challenge leading to this hardness result arises from the fact that the forecaster's selection strategy affects the feedback sequences observed by the experts which in turn alters their learning process.

{\bfseries ``Guided" feedbacks and achieving no-regret guarantees.}
In order to circumvent this hardness result, we consider the following practical assumption: we allow the forecaster to ``guide" the learning process of the experts by filtering/blocking some of the feedbacks observed by them from the environment, \emph{i.e.},  the selected expert $i^t$ would not learn at time $t$ for some time steps. For instance, in the motivating application of offering personalized deals to users, the deal-aggregator site (forecaster) often primarily interacts with users on behalf of the individual daily-deal marketplaces (experts) and  hence can control the flow of feedback to these marketplaces. Alternatively, we note that this process of guiding and restricting the feedback can be achieved via coordination between the forecaster and the selected expert $i^t$ with a $1$-bit of communication at time $t$. Given this additional control, we design a novel algorithm \algo for the forecaster which carefully guides the feedbacks observed by experts. We prove that \algo achieves the worst-case expected cumulative regret of $\bigObound(\Time^\frac{1}{2 - \regretRate})$ after $T$ time steps against an oblivious adversary for a rich family of no-regret learning algorithms that experts may be implementing. For the special case of multi-armed bandits,  algorithm \algo is equivalent to that of the well-studied \expthree algorithm and hence matches the optimal regret bound of $\Thetabound(\Time^\frac{1}{2})$.

{\bfseries Connections to the existing results.}
\cite{maillard2011adaptive} studied the problem of competing against an adaptive adversary when the adversary's reward generation policy is restricted to a pre-specified set of known models. For this problem, the authors introduced the \expfour/\expthree algorithm, \emph{i.e.},  \expfour meta-algorithm with experts executing \expthree algorithms proving a regret of $\bigObound(T^{\frac{2}{3}})$ (\emph{cf.} \citet{bubeck2012regret} for a variant of the algorithm). This \expfour/\expthree algorithm is perhaps closest to ours, as it involves a forecaster where the experts are the learning entities. However, we note that our hardness result does not contradict their regret bounds---the key difference in their setting is that the forecaster has the power to modify the losses as seen by experts, and it provides an unbiased estimate of the losses to these experts. Moreover, their analysis is specific to the experts implementing the \expthree or bandit algorithms, whereas the focus of this paper is to present a more generic learning framework in which experts as learning entities may implement a broad class of learning algorithms. Our work is also related to contemporary work by \citet{AgarwalLNS16}, who study a variant of the problem tackled by \cite{maillard2011adaptive} and also prove a hardness result similar to that of ours. Note that, in applications where the experts  directly receive feedback from the environment, implementing the strategies of \cite{maillard2011adaptive,AgarwalLNS16} would require the forecaster to communicate the probability $p$ with which the expert $i^t$ was selected at time $t$. However, our proposed idea of guiding the feedback can be achieved via coordination between the forecaster and the selected expert $i^t$ with a $1$-bit of communication at time $t$.
\vspace{-3mm}
\section{The Model}\label{sec.model}
\vspace{-2mm}
We have the following entities in our problem setting: (i) an algorithm \generalalgo as the forecaster; (ii) the adversary \adv acting on behalf of the environment; and (iii) $\num$ experts $\expert_j \ \forall j \in \{1, \ldots N\}$ (henceforth denoted as $[\num]$). 


\renewcommand{\algorithmcfname}{Protocol}
\begin{algorithm2e}[t!]
	\ForEach {$t = 1, 2, \ldots, \Time$}{
	    \tcc*[h]{\textcolor{blue}{Adversary generates the following}} \\
  		\nl a private loss vector  $\loss^t$, \emph{i.e.}, $\loss^t(\action) \ \forall \ \action \in \actionSet$ \\
  		\nl a private feedback vector  $\feedback^t$, \emph{i.e.}, $\feedback^t(\action) \ \forall \ \action \in \actionSet$ \\
  		\nl a public context $\context^t \in \contextSet$\\
	    \tcc*[h]{\textcolor{blue}{Selecting an expert and performing an action}}\\
  		\nl \generalalgo selects an expert $i^t \in [\num]$ denoted as $\expert_{i^t}$\\
		\nl \generalalgo performs the action $\action_{i^t}^t$ recommended by $\expert_{i^t}$\\
	    \tcc*[h]{\textcolor{blue}{Feedback and updates}} \\
		\nl \generalalgo incurs (and observes) loss $\loss^t(\action_{i^t}^t)$ and updates its selection strategy	\\
		\nl	$ \forall j \in [\num]: j \neq i^t$, $\expert_j$ does not observe any feedback and makes no update\\			
		\nl $\expert_{i^t}$  observes feedback $\feedback^t(\action_{i^t}^t)$ from the environment and updates its learning state\\	
  }
  \caption{The interaction between adversary \adv, algorithm \generalalgo, and experts}
  \label{interaction} 
\end{algorithm2e}

\subsection{Specification of the Interaction}
Protocol~\ref{interaction} provides a high-level specification of the interaction between the $\num+2$ entities. The sequential decision making process proceeds in rounds $t = 1, 2, \ldots, \Time$ (henceforth denoted as $[T]$); for simplicity we assume that $T$ is known in advance to the algorithm and the results in this paper can be extended to an unknown horizon via the usual doubling trick \citep{cesa2006prediction}. Each expert $\expert_j$ where $j \in \num$ is associated with a set of actions $\actionSet_j$ and the action set of the algorithm \generalalgo is given by $\actionSet = \cup_{j \in [\num]} \actionSet_j$. For the clarify of presentation in defining the loss and feedback vectors, we will consider that the action sets of experts are disjoint.\footnote{Note that assuming the disjoint action sets across experts is w.l.o.g., as we can still simulate the shared actions by enforcing a constraint that the losses generated by the adversary are same for the shared actions at any given time.}

At any time $t$, the adversary \adv generates a private loss vector $\loss^t$ (\emph{i.e.}, $\loss^t(\action) \ \forall \ \action \in \actionSet$) and a private feedback vector $\feedback^t$ (\emph{i.e.}, $\feedback^t(\action) \ \forall \ \action \in \actionSet$). Additionally, the adversary \adv generates a context $\context^t \in \contextSet$ that is accessible to all the experts while recommending their actions at time $t$---this context essentially encodes all the side information from the environment accessible to the experts at time $t$ (\emph{e.g.}, this context could represent preferences of a user arriving at time $t$ in an online recommendation system). Simultaneously, the algorithm \generalalgo (possibly with some randomization) selects expert $\expert_{i^t}$ to seek advice. The selected expert $\expert_{i^t}$ recommends an action $\action_{i^t} \in \actionSet_{i^t} \subseteq \actionSet$ (possibly with its internal randomization) which is then performed by the algorithm. As feedback, the algorithm \generalalgo observes the loss $\loss^t(\action_{i^t}^t)$ and updates its strategy on how to select experts in the future.  All the experts apart from the one selected (\emph{i.e.}, $\expert_{j} \ \forall \ j \neq i^t$) observe no feedback and make no update at this time. The selected expert $\expert_{i^t}$ observes a feedback from the environment denoted as $\feedback^t(\action_{i^t}^t)$ and updates its learning state. At the end of time $t$, the algorithm \generalalgo incurs a loss of $\loss^t(\action_{i^t}^t)$.  

So far, we have considered a generic notion of the feedback received by the selected expert---this feedback essentially depends on the application setting and is supposed to be ``compatible" with the learning algorithm used by an expert. As a concrete example, consider an expert $\expert_j$ implementing the \expthree algorithm and taking action $\action^t_j$ at time $t$, then the feedback $\feedback^t(\action^t_j)$ received by this expert (if selected at time $t$) is the loss $\loss^t(\action^t_j)$; for the case of expert $\expert_j$ implementing the \hedge algorithm, the feedback $\feedback^t(\action^t_j)$ received by this expert (if selected at time $t$) is the set of losses $\loss^t(\action) \ \forall \ \action \in \actionSet_j$. The feedback could be more general, for instance, receiving a binary signal of rejection or acceptance of the offered deal when an expert is implementing a dynamic pricing based algorithm via the partial monitoring framework \citep{cesa2006prediction,bartok2014partial}. Also, we note that the special case of standard multi-armed bandits is captured by the setting in which $\actionSet_j$ is a singleton for every expert $j \in [\num]$. 

We assume that the losses are bounded in the range $[0,\lossmax]$ for some known $\lossmax \in \R_{+}$; w.l.o.g. we will use $\lossmax = 1$ \citep{auer2002nonstochastic}. We consider an oblivious (non-adaptive adversary) as is usual in the literature \citep{freund1995desicion,auer2002nonstochastic}, \emph{i.e.}, the loss vector $\loss^t$, the feedback vector $\feedback^t$, and the context $\context^t$ at any time $t$ do not depend on the actions taken by \generalalgo, and hence can be considered to be fixed in advance. Apart from that, no other restrictions are put on the adversary, and
it has complete knowledge about the algorithm \generalalgo and the learning dynamics of the experts. 






\subsection{Specification of the Experts}
We consider a generic black-box approach in which \generalalgo does not know and cannot control the internal dynamics of the experts. In order to formally state the objective and guarantees we seek, we now provide a generic specification of the experts. At time $t$, let us denote an instance of feedback received by $\expert_{i^t}$ by a ßtuple $\history = (\action^t_{i^t}, \context^t, \feedback^t(\action^t_{i^t}))$. For any expert $\expert_j$ where $j \in [\num]$, let $\historySet^t_j = (\history^1, \history^2, \ldots)$ denote the feedback history for  $\expert_j$, \emph{i.e.}, an ordered sequence of feedback instances observed by $\expert_j$ up to time $t$. The length $|\historySet^t_j|$ denotes the number of learning steps for $\expert_j$ up to time $t$. At time $t$, the action $\action^t_j$ recommended by $\expert_j$ to the algorithm, if this expert is selected,  is given by $\action^t_j = \pi_j(\context^t, \historySet^t_j)$  where $\pi_j$ is a (possibly randomized) function of $\expert_j$, taking as input a context and a history of feedback sequence, and outputs an action $\action \in \actionSet_j$. Importantly, this history $\historySet^t_j$ is dependent on the execution of the algorithm \generalalgo--- for clarify of presentation, we denote it as $\historySet^t_{j, \generalalgo}$.

{\bfseries No-regret learning dynamics.} To be able to say anything meaningful in this setting, we introduce the constraint of \emph{no-regret} learning dynamics on the experts.\footnote{In order to prove the no-regret guarantees for our algorithm \algo in Section~\ref{sec.alg}, this constraint is required to hold \emph{only} for the \emph{best} expert against which we want to be competitive, a less stringent requirement.}  Let us consider any sequence of loss vector $\loss$, feedback vector $\feedback$, and context $x$ given by $\sequence = \big((\loss^\tau, \feedback^\tau, \context^\tau)\big)_{\tau=\{1, 2, \ldots\}}$ generated arbitrarily by \adv and let $\sequencelen$ denotes its length. Consider a setting in which an expert $\expert_j$ for any $j \in [\num]$ is selected at every time step. At every time step $\tau \in [\sequencelen]$, $\expert_j$ recommends an action $\action^\tau_j$, accumulates the loss $\loss(\action^\tau_j)$, and observes the feedback $\feedback(\action^\tau_j)$. In this setting, $\expert_j$ observes feedback at every time step and we denote this ``complete" history of feedback sequence at any time $\tau \in \sequencelen$ as  $\historySet^t_{j,1}$ whereby $1$ denotes the fact that this expert is selected and receives feedback with probability $1$ at every time step. Then, the no-regret learning dynamics of $\expert_j$ parameterized by $\regretRate_j \in [0, 1]$ guarantees that the expected average regret vanishes as follows\footnote{Note that this is a weaker notion of regret---any deterministic policy $\pi_j$ that always outputs a constant action has $\beta_j=0$. However, $\beta_j=0$ would be the right way to characterize the learning dynamics of this expert for our setting.}:
\begin{align}
\E\bigg[\frac{1}{\sequencelen}\sum_{\tau=1}^{\sequencelen} \loss^\tau\big(\pi_j(\context^\tau, \historySet^\tau_{j,1})\big)\bigg] - \E\bigg[\frac{1}{\sequencelen}\sum_{\tau=1}^{\sequencelen} \loss^\tau\big(\pi_j(\context^\tau, \historySet^{\sequencelen}_{j,1})\big)\bigg] \leq \bigObound(\sequencelen^{\regretRate_j - 1}) 
\label{eq.noregretdynamics}
\end{align}
where the expectation is w.r.t. the randomization of function $\pi_j$. We assume that  parameter $\regretRate \in [0, 1]$ upper bounds the regret rate parameters of individual experts and is a parameter known to the forecaster.\footnote{Again for our algorithm \algo in Section~\ref{sec.alg}, $\regretRate$ \emph{only} needs to upper bound the regret rate for the \emph{best} expert against which we want to be competitive.}

\subsection{Our Objective: No-Regret Guarantees}\label{sec.model.objective}
Intuitively, we want to be competitive w.r.t. the cumulative losses the algorithm could receive by following the policy of always using the advice of one single expert---such a policy ensures that the single expert gets more feedback to improve its learning state and hence incur less cumulative loss. This is a challenging problem when the experts are learning entities. For instance, what may go wrong is that the \emph{best} expert could have a slow rate of learning/convergence thus incurring high losses in the beginning, misleading the algorithm to essentially ``downweigh" this expert. This is turn further exacerbates the problem for the best expert in the bandit feedback setting as this expert will be selected less and will have fewer learning steps to improve its state. This adds new challenges to the classic trade-off between exploration and exploitation, suggesting the need to explore at higher rate to tackle this problem.

Let us begin by looking at the classic notion of \emph{external} regret used in the literature \citep{auer2002nonstochastic, cesa2006prediction, bubeck2012regret}. Given that the experts are learning entities, naturally the losses incurred at any time step are dependent on the history of the forecaster's actions as that history defines the current learning state of the individual experts. Given this subtle issue of history dependent losses, the usual notion of \emph{external} regret does not provide any meaningful guarantees in terms of competing against the ``best expert in hindsight" (see below for a formal definition); the bounds given by the \emph{external} regret are only w.r.t. the post hoc sequence of actions performed and losses observed during the execution of the algorithm (\emph{cf.} \citet{maillard2011adaptive,arora2012online,mcmahan2009tighter} for more discussion on this).

We consider the following natural notion of regret in this paper: our goal is to be competitive w.r.t.  the \emph{best expert in hindsight}, that is, competitive w.r.t. the  cumulative loss that any expert could have received with the optimal actions it could have taken in hindsight. Formally, the expected cumulative regret of \generalalgo against the best expert in hindsight is given by:

\vspace{-2mm}
\begin{align}
\regret(\Time, \generalalgo) \coloneqq \sum_{t=1}^{\Time}\E\bigg[\loss^t\Big(\pi_{i^t}(\context^t, \historySet^t_{{i^t},\generalalgo})\Big)\bigg] - \min_{j \in [\num]} \E\bigg[\sum_{t=1}^{\Time} \loss^t\Big(\pi_j(\context^t, \historySet^{T}_{j, 1})\Big)\bigg] \label{eq.objective}
\end{align}
where the expectation is w.r.t. the randomization of the algorithm as well as any internal randomization of the experts. Our goal is to design an algorithm \generalalgo for the forecaster so that the regret $\regret(\Time, \generalalgo)$ grows sublinearly in time $T$.

\section{Hardness Result}\label{sec.hardness}
We show in this section that, in the absence of any coordination between the forecaster and experts, it is impossible to design a forecaster that achieves no-regret guarantees in the worst-case. Somewhat surprisingly, we prove this hardness result when playing against an oblivious (non-adaptive) adversary and when restricting the experts to be implementing the well-studied \hedge algorithm \citep{freund1995desicion}. We formally state this hardness result in the Theorem~\ref{hardness_result} below.



\begin{theorem} \label{hardness_result}
There is a setting in which each of the experts has no-regret learning dynamics with parameter $\regretRate = \frac{1}{2}$; however, any algorithm \generalalgo (forecaster) will suffer a positive average regret, i.e., $\regret(\Time, \generalalgo) = \Omega(T)$. 
\end{theorem}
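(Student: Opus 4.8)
The plan is to prove the lower bound by constructing an explicit family of instances and invoking Yao's minimax principle. I would specify a \emph{randomized} oblivious adversary---a distribution over fixed loss/feedback sequences---and argue that every \emph{deterministic} forecaster \generalalgo incurs $\E[\regret(\Time,\generalalgo)] = \Omegabound(\Time)$ against it; since a randomized forecaster is a mixture of deterministic ones, this transfers to all forecasters, and averaging exhibits a single fixed oblivious instance on which the stated bound holds. The instance I would use is a symmetric \emph{hidden good expert} construction: take $\num = \Theta(\Time)$ experts, each running \hedge on a two-element action set, pick an index $j^{\star}$ uniformly at random, and give expert $j^{\star}$ one action of loss $0$ at every round, while its other action and both actions of every other expert always have loss $1$. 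The reason the number of experts must scale with $\Time$ is itself instructive, and I would highlight it: the $\regretRate = \tfrac{1}{2}$ no-regret property forces any single good expert to \emph{expose} its low loss after only $\bigObound(1)$ selections (its average loss after $s$ steps is $\bigObound(s^{-1/2})$), so a lone slowly-converging expert cannot by itself generate $\Omegabound(\Time)$ regret; the hardness must instead come from a population so large that bandit-type feedback cannot survey it.

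Three facts then have to be verified. First, each expert genuinely has no-regret dynamics with $\regretRate = \tfrac{1}{2}$: this is exactly the anytime \hedge guarantee, whose cumulative regret over $s$ learning steps on a fixed two-action set is $\bigObound(\sqrt{s})$, so condition~(1) holds with a horizon-independent constant. Second, the hindsight comparator is $\smallObound(\Time)$: if $j^{\star}$ were selected at all $\Time$ rounds it would receive $\Time$ full-information updates and place weight $1-e^{-\Theta(\sqrt{\Time})}$ on its loss-$0$ action, so $\min_{j}\E[\sum_t \loss^t(\cdot)] = \smallObound(1)$. Third, and this is the crux, any forecaster incurs cumulative loss $\Omegabound(\Time)$: feed the forecaster the all-ones feedback signal and let $\{i_1,\dots,i_{\Time}\}$ be the resulting deterministic sequence of selected experts, a set of size at most $\Time$; whenever $j^{\star}$ falls outside this set---an event of probability at least $\tfrac{1}{2}$, since $j^{\star}$ is uniform over $\Theta(\Time)$ experts---every selected expert is a bad one, every observed loss is indeed $1$, the run reproduces this very trajectory, and the forecaster performs $\Time$ actions each of loss $1$. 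Combining, $\E_{j^{\star}}[\regret(\Time,\generalalgo)] \ge \tfrac{1}{2}\Time - \smallObound(\Time) = \Omegabound(\Time)$.

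The main obstacle is making the loss lower bound of the third step watertight against \emph{all} adaptive and internally-randomized forecasters rather than the clean deterministic all-ones caricature above. One must argue that bandit feedback truly localizes information to the single selected expert each round---so the forecaster cannot tell $j^{\star}$ apart from the bad experts it has not probed---while accounting for the fact that even a probed good expert emits loss $1$ on some of its earliest selections (which only strengthens the bound but complicates the coupling). I would formalize this with an indistinguishability/coupling argument over the identity of $j^{\star}$, showing that the distribution of the forecaster's transcript is nearly invariant under swapping $j^{\star}$ with any never-selected expert. The accompanying conceptual difficulty---which I think is the real content---is reconciling the two pulls of the problem: the experts must be bona fide vanishing-regret ($\regretRate = \tfrac{1}{2}$) learners, yet collectively unusable. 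Resolving this is what forces the construction into the many-experts regime and pins the hardness on the interaction between the learning requirement and the information bottleneck of bandit feedback, precisely the coordination-free obstruction that the theorem asserts.
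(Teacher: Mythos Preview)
Your construction does yield an $\Omega(T)$ lower bound, but it does not prove the theorem as stated, and it misidentifies the source of hardness. The theorem asserts the existence of a \emph{fixed} setting---a fixed number $\num$ of experts with fixed action sets---for which $\regret(\Time,\generalalgo)=\Omega(\Time)$ as $\Time\to\infty$. Your instance takes $\num=\Theta(\Time)$, so there is no single setting witnessing the bound; for any fixed $\num$ your argument gives nothing once $\Time\gg\num$. More tellingly, your lower bound has nothing to do with the experts being learners: the identical argument goes through if every expert has a single action (good expert loss $0$, all others loss $1$), which is just the trivial observation that multi-armed bandits with $\Theta(\Time)$ arms incur linear regret. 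The \hedge dynamics in your construction are decorative. Since the paper's algorithm \algo, under guided feedback, achieves regret $\bigObound(\Time^{1/(2-\regretRate)}\num^{1/(2-\regretRate)})$, any scaling $\num=\Theta(\Time)$ trivially produces super-linear regret even in the ``easy'' regime---so your construction cannot be separating the unguided from the guided setting, which is the point of the theorem.

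The paper's proof is essentially the opposite of yours: it uses only $\num=2$ experts, and the hardness arises precisely from the mechanism you dismissed as impossible. Expert $\expert_1$ runs \hedge on two actions $\{a_1,a_2\}$; $\expert_2$ has a single action $b$. The adversary randomizes over three carefully aligned loss sequences $L_1,L_2,L_3$. On $L_1$, any forecaster that avoids linear regret on $L_2$ and $L_3$ is forced to \emph{not} select $\expert_1$ for a $\Theta(\Time)$-length window in $(\Time/4,\Time/2]$. This gap is a ``blind spot'' in $\expert_1$'s feedback history: the cumulative losses of $a_1$ versus $a_2$ \emph{as perceived by $\expert_1$} become skewed by $\Theta(\Time)$, so later (in $(\tfrac{11}{12}\Time,\Time]$) \hedge confidently plays the wrong action $a_2$, incurring $\Theta(\Time)$ excess loss. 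Your intuition that ``a lone slowly-converging expert cannot by itself generate $\Omega(\Time)$ regret'' is exactly what the paper refutes: the no-regret guarantee in \eqref{eq.noregretdynamics} is for the \emph{full} feedback history $\historySet_{j,1}$, but the forecaster's selection strategy determines which subsequence the expert actually sees, and an adversarial subsequence can make \hedge converge to the wrong action. That interaction---not a large population---is the obstruction.
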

\begin{figure*}[!t]
\centering
   \subfigure[Cumulative losses ($L_1$)]{
     \includegraphics[width=0.29\textwidth]{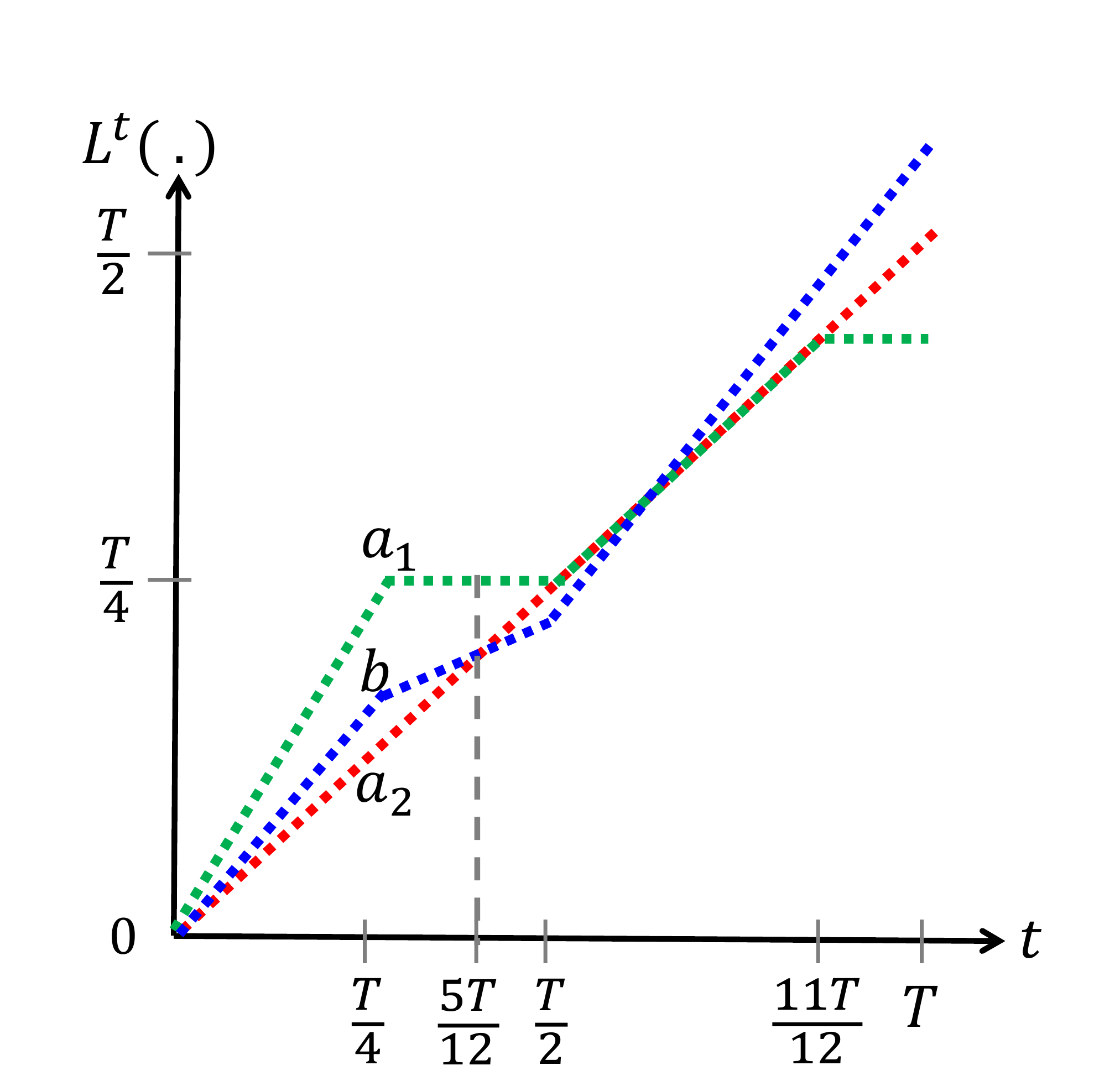}
     \label{fig.hardness-example-L1}
   }
   \subfigure[Cumulative losses ($L_2$)]{
     \includegraphics[width=0.29\textwidth]{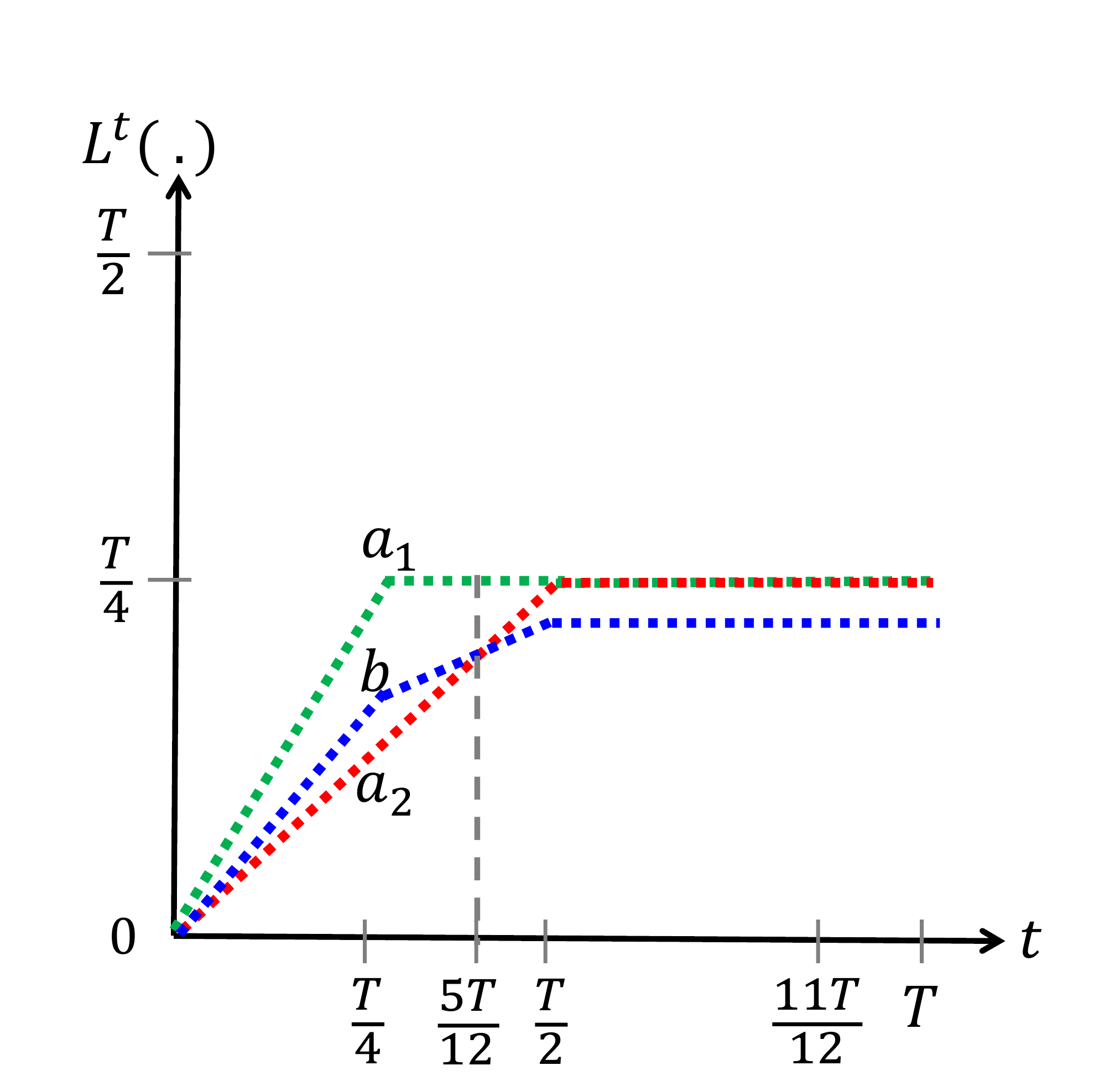}
     \label{fig.hardness-example-L2}
   }
   \subfigure[Cumulative losses ($L_3$)]{
     \includegraphics[width=0.29\textwidth]{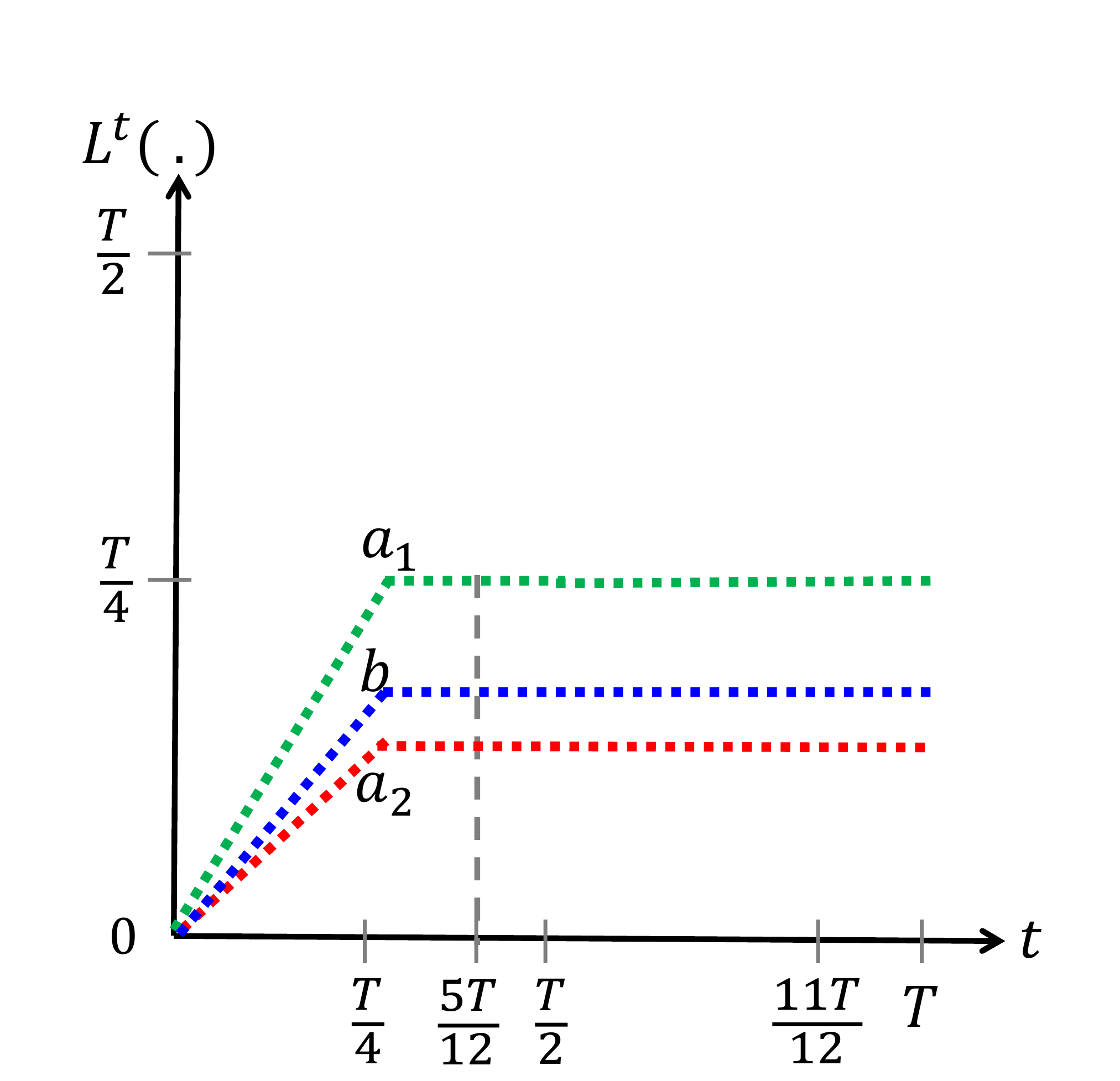}
     \label{fig.hardness-example-L3}
   }
\caption{We have two experts: $\expert_1$ plays \hedge and has two actions $\actionSet_1 = \{a_1, a_2\}$, and $\expert_2$ has only one action $\actionSet_2 = \{b\}$. Figures~\ref{fig.hardness-example-L1}, ~\ref{fig.hardness-example-L2}, and ~\ref{fig.hardness-example-L3} shows the \textit{cumulative} loss sequences $L_1$, $L_2$, and $L_3$ for three different scenarios---the adversary at $t=0$ uniformly at random picks one of these scenarios and uses that loss sequence. These plots show the cumulative losses of the three actions $\actionSet=\{a_1, a_2, b\}$ for three different sequences. The losses are illustrated with the following color scheme---$a_1$:\emph{green}, $a_2$:\emph{red}, and $b$:\emph{blue}.}
\label{fig.hardness-example-three-sequences}
\end{figure*}
%

The proof is given in the Appendix, we briefly outline the main ideas below. Our setting for proving this theorem consists of two experts $\expert_1$ and $\expert_2$.  The first expert $\expert_1$ has two actions given by $\actionSet_1 = \{a_1, a_2\}$, and the second expert $\expert_1$ has only one action given by $\actionSet_2 = \{b\}$. The action set of the algorithm \generalalgo is given by $\actionSet = \{a_1, a_2, b\}$. The expert $\expert_1$  plays the \hedge algorithm \citep{freund1995desicion}, \emph{i.e.}, the regret rate parameter is $\beta_1 = 0.5$; the expert $\expert_2$ has only one action to play as in the standard multi-armed bandit with $\beta_2 = 0$.\footnote{In fact, this hardness result holds even when considering a powerful forecaster which knows exactly the learning algorithms used by the experts, and is able to see the losses $\{l^t(a_1), l^t(a_2), l^t(a_3)\}$ at every time $t \in [T]$.} Figures~\ref{fig.hardness-example-L1}, ~\ref{fig.hardness-example-L2}, and ~\ref{fig.hardness-example-L3} show the \textit{cumulative} loss sequences $L_1$, $L_2$, and $L_3$ for three different scenarios---the adversary at $t=0$ uniformly at random picks one of these scenarios and uses that loss sequence.


The main idea of the proof uses the following arguments. We consider the case where the forecaster is facing the sequence $L_1$ (chosen by the adversary with probability $\frac{1}{3}$ at $t=0$). We then divide the time horizon $T$ into different slots and discuss the execution behavior of the forecaster and experts over these time slots. Specifically, our claim is that in the time slot $t \in (\frac{T}{4}, \frac{T}{2}]$, the expert $\expert_1$ would not be selected for $\frac{T}{12}  - \smallObound(T))$ time steps. As a result, in the time slot $t \in (\frac{11T}{12}, T]$, the expert $\expert_1$ would select action $a_2$ almost surely, and $a_1$ would only be selected $\smallObound(T)$ number of times, leading to a positive average regret for the forecaster. Informally speaking, our negative example shows that the forecaster's selection strategy could add ``blind spots" in the feedback history seen by the experts and that they might not be able to ``recover" from this. The key fundamental challenge leading to this hardness result is that the forecaster's selection strategy affects the feedback sequences observed by the experts, which in turn alters the experts' learning process.

\section{Our Algorithm \algo} \label{sec.alg}
In this section, we introduce a practical assumption that allows the forecaster to ``guide" the learning process of experts, and then we design our main algorithm \algo with provable no-regret guarantees.

\subsection{Guided Feedbacks}
In order to circumvent the hardness result proved in Section~\ref{sec.hardness}, we now consider a practical assumption motivated by the application setting of deal-aggregator sites, as discussed in Section~\ref{sec.introduction}. Usually, a deal-aggregator site interacts with users on behalf of the individual daily-deal marketplaces (experts) and hence could control the flow of feedback to these marketplaces. Hence, we allow the forecaster to ``guide" the learning process of the experts by filtering/blocking some of the feedback the experts receive from the environment.  Recall that at time $t$, as per the interaction model presented in Section~\ref{sec.model}, the selected expert $\expert_{i^t}$ observes feedback $\feedback^t(\action_{i^t}^t)$ from the environment. We now consider the setting with the following additional power in the hands of the forecaster: In order to guide the learning process of the experts, the forecaster at time $t$ could block the feedback, \emph{i.e.}, the expert $\expert_{i^t}$ would not observe feedback at time $t$  and hence would not learn at this time (just like any other experts who were not selected at time $t$). Alternatively, we note that this process of guiding the feedback could be achieved via coordination between the forecaster and the selected expert $\expert_{i^t}$ with a $1$-bit communication at time $t$.


\renewcommand{\algorithmcfname}{Algorithm}
\begin{algorithm2e}[t!]
\nl {\bf Parameters}: {$\eta \in (0, 1]$} \\
\nl {\bf Initialize}: {time $t = 1$, weights $w_j^t = 1 \ \forall j \in [\num]$}\\
	\ForEach {$t = 1, 2, \ldots, \Time$}{
	    \tcc*[h]{\textcolor{blue}{Selecting an expert and performing an action}}\\
		\nl $ \forall j \in [\num]$, define probability $p^t_j = (1 - \eta) \cdot \dfrac{w^t_j}{\big(\sum_{k \in [\num]} w^t_k\big)} + \dfrac{\eta}{\num}$\\
  		\nl Draw $i^t$ from the multinomial distribution $(p^t_j)_{j \in [\num]}$\\  	
  				
		\nl Perform action $\action_{i^t}^t$ recommended by the expert $\expert_{i^t}$\\
	    \tcc*[h]{\textcolor{blue}{Observing the loss and making updates}} \\
		\nl Observe loss $\loss^t(\action_{i^t}^t)$	\\
		\nl $ \forall j \in [\num]$, do the following: \\
			\Indp
			 \nl Set $\widetilde{\loss}^t_j$ as follows: $\widetilde{\loss}^t_j = \dfrac{\loss^t(\action_{i^t}^t)}{p^{t}_{i^t}} \text{ for } j = i^t$, else $\widetilde{\loss}^t_j = 0$ \\
			\nl	Update $w^{t+1}_j \leftarrow w^{t}_j \cdot \exp({-\dfrac{\eta \cdot \widetilde{\loss}^t_j}{\num}})$\\
			\Indm
	    \tcc*[h]{\textcolor{blue}{Guiding the feedback}} \\
		\nl $\xi^t \sim Bernoulli(\dfrac{\eta}{\num \cdot p^t_{i^t}})$ \label{algoline.guide1} \\
		\nl \If{$(\xi^t = 1)$}{ \label{algoline.guide2}
		\nl $\expert_{i^t}$  observes feedback $\feedback^t(\action_{i^t}^t)$ from the environment and updates its learning state \label{algoline.guide3}\\
		}
  }
  \caption{\algo}
  \label{mainalgo} 
\end{algorithm2e}

\subsection{Algorithm \algo}
With this additional power of the forecaster to guide feedback, we develop our main algorithm \algo, presented in Algorithm~\ref{mainalgo}. The selection strategy of the algorithm \algo is similar to the EXP family of algorithms, and in particular is equivalent to the \expthree algorithm by \cite{auer2002nonstochastic}. The core idea of guiding the feedbacks observed by experts is presented in Lines~\ref{algoline.guide1},\ref{algoline.guide2}, and \ref{algoline.guide3}. 

By default, as per the Protocol~\ref{interaction}, the selected expert $\expert_{i^t}$ always observes feedback at time $t$---for this protocol, the hardness result of Theorem~\ref{hardness_result} applies. Our algorithm \algo instead decides whether the expert $\expert_{i^t}$ should observe/use the feedback based on the outcome $\xi^t$ of a coin flip with probability $\frac{\eta}{\num \cdot p^t_{i^t}}$. By choosing this particular probability, the algorithm \algo ensures that the probability that any expert $\expert_j$ observes feedback at time $t$ is constant over time and is given by $\frac{\eta}{\num}$. The key parameter of the algorithm $\eta$ would be fixed in Theorem~\ref{thm.regretbounds}  based on the regret rate $\beta$ to achieve the desired guarantees on the regret.

The guarantees in Theorem~\ref{thm.regretbounds} mean that by adding this additional control/coordination in our model, we are able to circumvent the hardness result of Theorem~\ref{hardness_result}. Interestingly, if we consider any expert $\expert_j$ for $j \in [\num]$, the history $\historySet^t_j$ at any time $t$ under this guided feedback setting would only contain a subset of the feedback instances that it would have received without guiding (\emph{i.e.}, where $\xi^t = 1 \ \forall t \in [T]$). By carefully allowing the expert to observe a strictly smaller set of feedback instances allows us to ensure that the expert $\expert_j$ achieves low regret. Considering the example we use in the proof of Theorem~\ref{hardness_result} to show the hardness results, this means that by carefully guiding the feedback received by experts, our algorithm \algo ensures that there are no ``blind spots" in the feedback history of any expert. However, in order to achieve this, the algorithm is required to explore at a higher rate, as is evident by the value of $\eta$ in Theorem~\ref{thm.regretbounds}.

\subsection{Theoretical Guarantees}
Next, we analyze the theoretical guarantees of our algorithm \algo. One approach to doing this is to consider a particular class of no-regret learning algorithms that experts implement and prove guarantees for that class. Instead, we introduce a novel, generic notion of ``smooth" no-regret learning---our theoretical guarantees are then proven for the experts that have \emph{no-regret} and \emph{smooth} learning dynamics. Next, we introduce this notion and then discuss (see Proposition~\ref{lemma.smooth}) the class of no-regret learning algorithms that also satisfy the constraint of smooth learning dynamics.

\subsubsection{Smooth Learning Dynamics}
In our bandit feedback setting, not all the experts can observe feedback at a given time step, and hence the history of feedback instances received by any particular expert is naturally ``sparse". To formally state the behavior of the learning algorithm under this sparse feedback, we now introduce a new notion, termed \emph{smooth} learning dynamics, to complement the no-regret learning dynamics defined in \eqref{eq.noregretdynamics}. Consider the same fixed sequence $\sequence$ as used in defining \eqref{eq.noregretdynamics} and an expert $\expert_j$. However, instead of observing feedback at every time step, let's say that the expert $\expert_j$ only gets to observe the feedback sporadically at a rate of $\smoothRate \in (0,1]$---we call this an $\smoothRate$-sparse history, denoted as $\historySet^l_{j, \smoothRate}$. Then, the constraint of smooth learning dynamics ensures that the expected regret of the expert $\expert_j$ when receiving the above-mentioned sparse feedback vanishes (smoothly w.r.t. rate $\smoothRate$) as follows:

\begin{align}
\E\bigg[\frac{1}{\sequencelen}\sum_{\tau=1}^{\sequencelen} \loss^\tau\big(\pi_j(\context^\tau, \historySet^\tau_{j,\smoothRate})\big)\bigg] - \E\bigg[\frac{1}{\sequencelen}\sum_{\tau=1}^{\sequencelen} \loss^\tau\big(\pi_j(\context^\tau, \historySet^{\sequencelen}_{j,1})\big)\bigg] \leq \bigObound((\smoothRate \cdot \sequencelen)^{\regretRate_j - 1})
\label{eq.smoothdynamics}
\end{align}
where the expectation is w.r.t. the randomization of function $\pi_j$ as well as w.r.t. the randomization in generating this sparse history. The following proposition states that a rich class of online learning algorithms indeed have \emph{smooth} learning dynamics that can be used by the experts, \emph{cf.} Appendix for the proof.
\begin{proposition} \label{lemma.smooth}
A rich class of no-regret online learning algorithms based on gradient-descent style updates have smooth learning dynamics including the Online Mirror Descent family of algorithms with exact or estimated gradients \citep{shalev2011online} and Online Convex Programming via greedy projections \citep{zinkevich03online}.
\end{proposition}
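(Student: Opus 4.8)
The plan is to establish the smooth bound \eqref{eq.smoothdynamics} directly from the regret analysis of the gradient-style update, using the key structural fact that under an $\smoothRate$-sparse history the update rule is run \emph{verbatim} on the subsequence of observed rounds while the iterate is frozen on the blocked rounds. I would carry out the argument for Online Gradient Descent / greedy projections \citep{zinkevich03online} and then indicate the minor changes needed for the general Online Mirror Descent family \citep{shalev2011online}. Throughout I write $f^\tau(w)$ for the loss $\loss^\tau(\cdot)$ incurred when the action induced by iterate $w$ at context $\context^\tau$ is played, so that $\pi_j(\context^\tau, \historySet^\tau_{j,\smoothRate})$ corresponds to evaluating $f^\tau$ at the current iterate $W^\tau$, and the benchmark action $\pi_j(\context^\tau, \historySet^{\sequencelen}_{j,1})$ corresponds to $f^\tau(\bar{w})$, where $\bar{w}$ is the iterate produced after the \emph{full} dense history.

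First I would observe that on a blocked round no update occurs, so the iterates visited at the observed rounds $\tau_1 < \dots < \tau_n$ form exactly the trajectory of the dense update rule run on $(f^{\tau_1}, \dots, f^{\tau_n})$. Hence the algorithm's \emph{anytime} regret guarantee applies on this length-$n$ subsequence against any fixed comparator $u$: $\sum_{k=1}^{n}\bigl(f^{\tau_k}(w_{(k-1)}) - f^{\tau_k}(u)\bigr) \le R(n)$ with $R(n)=\bigObound(n^{\regretRate_j})$ (for greedy projections $R(n)=\bigObound(\sqrt{n})$, using a horizon-free step size $\eta_k \propto 1/\sqrt{k}$ indexed by the update count, so that no knowledge of the realized $n$ is required). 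I then take $u=\bar{w}$; since $\bar{w}$ is determined by the \emph{fixed} sequence $\sequence$ and is independent of the random blocking pattern, it is a legitimate comparator, and it is precisely the point defining the benchmark term on the right-hand side of \eqref{eq.smoothdynamics}.

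The crux is then to pass from the \emph{observed} loss to the \emph{total} loss. Let $\xi^\tau\sim\mathrm{Bernoulli}(\smoothRate)$ be the coin that decides observation at round $\tau$. Because the iterate $W^\tau$ depends only on coins strictly before $\tau$, it is independent of $\xi^\tau$, so $\E[\xi^\tau f^\tau(W^\tau)] = \smoothRate\,\E[f^\tau(W^\tau)]$, and the identical relation holds with the fixed $\bar{w}$. Summing over $\tau$ shows that the expected observed loss equals $\smoothRate$ times the expected total loss, and similarly for the comparator; combining with the subsequence bound gives $\E\bigl[\sum_\tau f^\tau(W^\tau)\bigr] \le \sum_\tau f^\tau(\bar{w}) + \smoothRate^{-1}\,\E[R(n)]$. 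Finally, since $n=\sum_\tau \xi^\tau$ has mean $\smoothRate\,\sequencelen$ and $x\mapsto x^{\regretRate_j}$ is concave for $\regretRate_j\le 1$, Jensen gives $\E[R(n)]=\bigObound((\smoothRate\,\sequencelen)^{\regretRate_j})$; dividing by $\sequencelen$ yields the average regret $\bigObound(\smoothRate^{-1}(\smoothRate\,\sequencelen)^{\regretRate_j}/\sequencelen)=\bigObound((\smoothRate\,\sequencelen)^{\regretRate_j-1})$, which is exactly \eqref{eq.smoothdynamics} (and recovers \eqref{eq.noregretdynamics} at $\smoothRate=1$). The extension to OMD with exact or estimated gradients only replaces $R(n)$ by the corresponding mirror-descent regret, still $\bigObound(n^{\regretRate_j})$ in expectation; for importance-weighted/estimated gradients the expectation also integrates the estimator's internal randomness, which is independent of the blocking coins, so the same steps go through.

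I expect the main obstacle to be the interplay between the two expectations in the third paragraph: one must use an \emph{anytime} (horizon-free) step size so that $R(n)=\bigObound(n^{\regretRate_j})$ holds for the \emph{random} realized number of updates $n$, and one must establish genuine independence (not mere uncorrelatedness) of the iterate $W^\tau$ from the observation coin $\xi^\tau$. These two facts are exactly what make the clean factor-$\smoothRate$ cancellation valid and thus produce the effective-sample-size rate $(\smoothRate\,\sequencelen)^{\regretRate_j-1}$ rather than a weaker bound.
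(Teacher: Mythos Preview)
Your argument is correct and yields exactly the target rate in \eqref{eq.smoothdynamics}, but it proceeds along a genuinely different route from the paper's proof. The paper carries out two algorithm-specific analyses. For greedy projections it rewrites the $\smoothRate$-sparse update as a stochastic gradient step with importance-weighted gradient $\tilde z^t=(B^t/\smoothRate)\,z^t$ and rescaled learning rate $\tilde\eta^t=\smoothRate\,\eta^t$, then applies the standard OCP telescoping together with multiplicative Chernoff bounds to control $\E[\eta_t]$ and $\E[1/\eta_T]$, arriving at an $\bigObound(\sqrt{T/\smoothRate})$ bound. For OMD it uses a three-stage argument: (i) an $\smoothRate$-OMD lemma with fixed horizon obtained by passing to the modified regularizer $R(\cdot)/\smoothRate$; (ii) a random-horizon version where one waits until $M$ feedbacks have been received, controlling $\E[|T_{\mathrm{stop}}-M/\smoothRate|]$ via Chernoff; and (iii) a summation over doubling-trick blocks. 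By contrast, your proof is a clean black-box reduction: freeze-on-block means the observed rounds reproduce the dense anytime trajectory on a random subsequence, independence of $W^\tau$ and the coin $\xi^\tau$ gives the exact factor-$\smoothRate$ link between observed and total expected loss, and Jensen on the concave map $n\mapsto n^{\regretRate_j}$ closes the argument.

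What each approach buys: your reduction is more modular and more general---it applies verbatim to any anytime learner with regret $R(n)=\bigObound(n^{\regretRate_j})$, not only to OCP/OMD, and it completely avoids the concentration and stopping-time machinery. The paper's approach, being explicit, exposes the constants (in $\norm{S}$, $L$) and, in the OMD case, accommodates algorithms that are made anytime via the doubling trick rather than via a step size indexed by the update count; your argument implicitly requires the latter (which you correctly flag as the main obstacle). Both routes produce the same $\bigObound((\smoothRate\,\sequencelen)^{\regretRate_j-1})$ average regret.
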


\subsubsection{No-regret Guarantees of \algo}
Next, we prove the no-regret guarantees of our algorithm \algo, formally stated in Theorem~\ref{thm.regretbounds}. The following theorem (stating only the leading terms w.r.t. the $\Time$ and dropping any other constants like $\num$) provides the no-regret guarantees of \algo against the best expert in hindsight as per \eqref{eq.objective}. The proof is given in the Appendix.

\begin{theorem} \label{thm.regretbounds}
Let $\Time$ be the fixed time horizon. Consider that the best expert $j^* \in [\num]$ has no-regret smooth learning dynamics parameterized by $\regretRate_{j^*} \in [0, 1]$ and \algo is invoked with input $\regretRate \in [0, 1]$ such that $\regretRate \geq \regretRate_{j^*}$. Set parameters $\eta = \Thetabound\big(\Time^{-\frac{1 - \regretRate}{2 - \regretRate}} \cdot \num^{\frac{1 - \regretRate}{2 - \regretRate}} \cdot (\log\num)^{(\frac{1}{2} \cdot\indfunc_{\{\regretRate = 0\}})}\big)$. Then, for sufficiently large $\Time$, the worst-case expected cumulative regret of \algo against the best expert in hindsight is:
$$\regret(\Time, \algo) \leq  \bigObound\big(\Time^{\frac{1}{2 - \regretRate}} \cdot \num^{\frac{1}{2 - \regretRate}} \cdot (\log\num)^{(\frac{1}{2} \cdot\indfunc_{\{\regretRate = 0\}})} \big)$$
\end{theorem}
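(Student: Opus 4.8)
My plan is to decompose the regret in \eqref{eq.objective} into a \emph{selection} (forecaster) term, handled by a textbook \expthree potential argument, and a \emph{learning} term for the best expert $j^*$, handled by the smooth no-regret property \eqref{eq.smoothdynamics}. The structural fact that makes the decomposition clean is the design of Lines~\ref{algoline.guide1}--\ref{algoline.guide3}: conditioned on any past, expert $\expert_j$ both gets selected and passes the feedback filter with probability $p^t_j\cdot\frac{\eta}{\num\,p^t_j}=\frac{\eta}{\num}$ (the uniform mass $\frac{\eta}{\num}$ in $p^t_j$ guarantees $\frac{\eta}{\num\,p^t_{i^t}}\le 1$, so this is a valid coin). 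Hence, regardless of how the forecaster's weights evolve, the feedback history $\historySet^t_{j,\algo}$ that any expert accumulates is an $\smoothRate$-sparse history with rate $\smoothRate=\frac{\eta}{\num}$; this is precisely the decoupling that defeats the ``blind spot'' phenomenon behind Theorem~\ref{hardness_result}. Writing $\ell^t_j:=\E\big[\loss^t(\pi_j(\context^t,\historySet^t_{j,\algo}))\mid\mathcal{F}_{t-1}\big]$ for the history-dependent expected instantaneous loss (with $\mathcal{F}_{t-1}$ the $\sigma$-field of all randomness before round $t$) and noting that the forecaster's expected loss at round $t$ equals $\E[\sum_j p^t_j\ell^t_j]$, I would split
\[
\regret(\Time,\algo)=\Big(\sum_{t}\E\big[\textstyle\sum_j p^t_j\ell^t_j\big]-\sum_{t}\E[\ell^t_{j^*}]\Big)+\Big(\sum_{t}\E[\ell^t_{j^*}]-\sum_{t}\E\big[\loss^t(\pi_{j^*}(\context^t,\historySet^{\Time}_{j^*,1}))\big]\Big),
\]
where the second bracket equals the gap to $\min_{j}\E[\sum_t\loss^t(\pi_j(\context^t,\historySet^{\Time}_{j,1}))]$ because $j^*$ is that fixed minimizer.

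For the learning term (second bracket) I would apply \eqref{eq.smoothdynamics} to $\expert_{j^*}$ with the fixed oblivious sequence $\sequence=((\loss^\tau,\feedback^\tau,\context^\tau))_{\tau\le\Time}$ of length $\sequencelen=\Time$ and sparsity $\smoothRate=\frac{\eta}{\num}$: the bracket is exactly $\sequencelen$ times the left-hand side of \eqref{eq.smoothdynamics}, so the learning term is at most $\Time\cdot\bigObound\big((\frac{\eta\Time}{\num})^{\regretRate-1}\big)=\bigObound\big(\Time^{\regretRate}(\num/\eta)^{1-\regretRate}\big)$, using $\regretRate\ge\regretRate_{j^*}$ and that the bound is increasing in $\regretRate$ once $\eta\Time/\num\ge1$. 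The one point needing care here is that the algorithm-induced inclusion of rounds into $\historySet^t_{j^*,\algo}$ is only \emph{conditionally} $\mathrm{Bernoulli}(\eta/\num)$ and may be correlated with the realized losses through the weights, rather than an i.i.d.\ $\smoothRate$-subsample; I would justify invoking \eqref{eq.smoothdynamics} for this stream via the constancy of the per-round conditional inclusion probability together with a martingale/coupling argument of the type underlying Proposition~\ref{lemma.smooth}.

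For the selection term (first bracket) I would run the standard \expthree potential analysis on the weights $w^t_j$ with effective step size $\frac{\eta}{\num}$. Setting $q^t_j=w^t_j/\sum_k w^t_k$ and using $e^{-x}\le 1-x+x^2$ on the nonnegative estimates $\widetilde{\loss}^t_j$, telescoping $\log(\sum_j w^t_j)$ from $t=1$ to $\Time$ and comparing against the single term $w^{\Time+1}_{j^*}$ yields, pathwise, $\sum_t\sum_j q^t_j\widetilde{\loss}^t_j-\sum_t\widetilde{\loss}^t_{j^*}\le \frac{\num\log\num}{\eta}+\frac{\eta}{\num}\sum_t\sum_j q^t_j(\widetilde{\loss}^t_j)^2$. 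Taking expectations and using the conditional unbiasedness $\E[\widetilde{\loss}^t_j\mid\mathcal{F}_{t-1}]=\ell^t_j$ (valid because $\historySet^t_{j,\algo}$, hence $\ell^t_j$, is $\mathcal{F}_{t-1}$-measurable and the importance weight cancels $p^t_j$), the per-round second-moment term is bounded by $\sum_j q^t_j\ell^t_j/p^t_j\le \num/(1-\eta)$, while converting from $q^t_j$ to $p^t_j=(1-\eta)q^t_j+\eta/\num$ costs an additive $\eta\Time$. This gives a selection term of $\bigObound\big(\frac{\num\log\num}{\eta}+\eta\Time\big)$; here $j^*$ being a fixed, data-independent index is what lets me plug it into the pathwise comparison.

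Combining the two pieces gives $\regret(\Time,\algo)=\bigObound\big(\frac{\num\log\num}{\eta}+\eta\Time+\Time^{\regretRate}(\num/\eta)^{1-\regretRate}\big)$, and I would finish by optimizing $\eta$. Balancing $\eta\Time$ against $\Time^{\regretRate}(\num/\eta)^{1-\regretRate}$ gives $\eta=\Thetabound\big((\num/\Time)^{\frac{1-\regretRate}{2-\regretRate}}\big)$ and a leading term $\Time^{\frac{1}{2-\regretRate}}\num^{\frac{1-\regretRate}{2-\regretRate}}$, while the $\frac{\num\log\num}{\eta}$ term contributes $\Time^{\frac{1-\regretRate}{2-\regretRate}}\num^{\frac{1}{2-\regretRate}}\log\num$, which is lower order in $\Time$ for $\regretRate>0$ (hence the ``sufficiently large $\Time$''); when $\regretRate=0$ this last term forces the extra $(\log\num)^{1/2}$ factor and the choice $\eta=\Thetabound(\sqrt{\num\log\num/\Time})$, recovering the \expthree bound $\Thetabound(\sqrt{\num\Time\log\num})$. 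Both are dominated by the stated $\bigObound\big(\Time^{\frac{1}{2-\regretRate}}\num^{\frac{1}{2-\regretRate}}(\log\num)^{\frac12\indfunc_{\{\regretRate=0\}}}\big)$. I expect the main obstacle to be the learning term: making rigorous that the forecaster-induced, weight-correlated feedback stream of $\expert_{j^*}$ genuinely satisfies the $\smoothRate$-sparse hypothesis of \eqref{eq.smoothdynamics}, since the selection term is essentially the standard \expthree argument once the estimates are seen to be conditionally unbiased for the history-dependent losses.
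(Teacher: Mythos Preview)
Your proposal is correct and follows essentially the same route as the paper: decompose \eqref{eq.objective} into an \expthree external-regret term against the fixed expert $j^*$ (the paper simply cites \cite{auer2002nonstochastic}, Theorem~3.1, whereas you reprove the potential argument) plus a learning term for $\expert_{j^*}$ controlled by the smooth dynamics \eqref{eq.smoothdynamics} with $\smoothRate=\eta/\num$, then optimize $\eta$. Your concern about the feedback stream being only ``conditionally'' $\mathrm{Bernoulli}(\eta/\num)$ is in fact resolved by the very observation you make: since the conditional inclusion probability is the constant $\eta/\num$ regardless of $\mathcal{F}_{t-1}$, the inclusion indicators are genuinely i.i.d.\ $\mathrm{Bernoulli}(\eta/\num)$ and independent of the oblivious loss/feedback sequence, so \eqref{eq.smoothdynamics} applies directly---the paper asserts this identification without comment.
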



For the special case of multi-armed bandits (where $\regretRate = 0$), this regret bound matches the bound of $\Thetabound(\Time^\frac{1}{2})$---in fact, for this special case, our algorithm \algo is exactly equivalent to \expthree. For an important case when experts are implementing algorithms like \hedge or \expthree (where $\regretRate = \frac{1}{2}$), our algorithm \algo achieves the bound of $\bigObound(\Time^\frac{2}{3})$.
\section{Background and Related Work}\label{sec.related}
In this section, we provide an overview of the relevant literature.


\vspace{-2mm}
\subsection{Background}
We begin with a background on the framework of learning using expert advice with bandit feedback.

{\bfseries Using expert advice.}
The seminal work of \cite{1992_weighted-majority,1997-acm_how-to-use-expert-advice} initiated the study of using expert advice for prediction problems, and \cite{freund1995desicion} introduced the algorithm \hedge for the general problem of dynamically allocating resources among a set of options using expert advice. 

{\bfseries Using expert advice with bandit feedback.}
However, the feedback is often limited in these settings in a sense that only the loss/reward associated with the action taken by the system is observed, referred to as the bandit feedback setting. To tackle this, \cite{auer2002nonstochastic} extended this framework to the limited feedback setting and introduced the EXP family of algorithms (\expthree, \expfour, and its variants) for multi-armed bandits  and expert advice with bandit feedback. With limited feedback, this framework addresses the fundamental question of how a learning algorithm should trade-off \emph{exploration} versus \emph{exploitation}. This framework has been studied extensively by researchers in a variety of fields and the above mentioned algorithms provide minimax optimal no-regret guarantees---we refer the reader to \cite{bubeck2012regret} for the survey on bandit problems  and monograph by \cite{cesa2006prediction}.

Furthermore, this framework is very generic and versatile to capture many complex real-world scenarios. For instance, in the EXP family of algorithms \citep{auer2002nonstochastic,mcmahan2009tighter,beygelzimer2010contextual},  each expert could have access to an arbitrary context (\emph{e.g.}, information about user preferences) that may not be shared among experts and may not be available to the algorithm. Furthermore, no statistical assumptions are needed on the process generating context or losses/rewards over time. Consequently, this framework has been used in many diverse application settings including search engine ad placement \citep{mcmahan2009tighter},  personalized news article recommendation \citep{beygelzimer2010contextual}, packet routing in networks, \citep{awerbuch2004adaptive}, and meta-learning with different learning algorithms as the experts \citep{baram2004online,hsu2015active}.

\vspace{-2mm}
\subsection{Related Work}
Next, we review research work that is relevant to the problem studied in this paper.

{\bfseries Markovian, rested, and restless bandits.}
The seminal work of \cite{gittins1979bandit} considered Markovian bandits where each action/arm is associated with its own stochastic MDP and introduces the \emph{Gittins index} to find an optimal sequential policy. Note that the arm changes its state only when it is pulled, hence also termed as \emph{rested} bandits. \cite{whittle1988restless} considered an extension termed \emph{restless} bandits where all the arms change their reward distributions at every time step according to their associated stochastic MDP. Restless bandits are notoriously difficult to tackle (\emph{cf.} \citep{slivkins2008adapting}), thereby \cite{slivkins2008adapting} considered a type of restless bandits where the change in state is governed by a more gradual process with stochastic rewards depending upon the state. \cite{2014-nips_non-stationary-rewards} considered another type of restless bandits with stochastic reward functions, however these distributions change adversarially with a budget on the allowed variation. Our approach is similar in spirit to the rested bandits; however, none of the frameworks above would model the learning dynamics of the experts in the adversarial setting we consider.

{\bfseries Non-oblivious/adaptive adversary.}
As in our setting, the challenge of history-dependent expert rewards also arises in the case of non-oblivious/adaptive adversary \citep{maillard2011adaptive,arora2012online}. \cite{arora2012online} studied online learning with bandit feedback against a non-oblivious adversary with \emph{bounded} memory and introduced the notion of policy regret instead of the usual notion of external regret. \cite{maillard2011adaptive} studied competing against adaptive adversary when the adversary's reward generation policy is restricted to a pre-specified set of known models.  However, none of the frameworks of non-oblivious/adaptive adversary listed above model learning dynamics in our setting: It would require an adversary with unbounded memory to apply the results of \cite{arora2012online}, and an adversary with unbounded number of models to apply the techniques of \cite{maillard2011adaptive}.



{\bfseries Contextual bandits.}
Another perspective on tackling some of the applications we mentioned above is the contextual bandit framework \citep{li2010contextual,langford2007epochgreedy,agarwal2014taming}. We refer the reader to the paper by \cite{mcmahan2009tighter} for more discussion on the connection between the framework of contextual bandits and learning using expert advice with bandit feedback.

{\bfseries Learning in games.}
An orthogonal line of research studies the interaction of agents in multiplayer games where each agent uses a no-regret learning algorithm \citep{blum2007learning,syrgkanis2015fast}. The questions tackled in this line of research are very different as it focuses on the interactions of the agents, their individual as well as social utilities, and the convergence of the game to equilibrium. This orthogonal line of research reassures that the no-regret learning dynamics that we consider in this paper are indeed important and natural dynamics that are also prevalent in other application domains.
\vspace{-2mm}
\section{Conclusions}\label{sec.conclusions}
In this paper, we investigated the online learning framework using expert advice with bandit feedback with an important practical consideration: how do we use the advice of the experts when they themselves are learning entities? As our first contribution, we proved the hardness result stating that it is impossible to achieve no-regret guarantees when the experts receive feedback directly from the environment and there is no further coordination between forecaster/experts. Our hardness result sheds light on the complexity of the problem when applying this online learning framework to real-world applications whereby it is natural for experts to exhibit learning dynamics.

Then, we considered a practical assumption of ``guided" feedbacks whereby the forecaster can block/filter the feedback received by the selected expect from the environment. Under this setting, we proposed a novel algorithm \algo---we proved that \algo achieves the worst-case expected cumulative regret of $\bigObound(\Time^\frac{1}{2 - \regretRate})$ after $T$ time steps where $\regretRate$ is a parameter characterizing the individual no-regret learning dynamics of the best expert. This regret bound matches the bound of $\Thetabound(\Time^\frac{1}{2})$ for the special case of multi-armed bandits.

There are a number of research directions for future work. An interesting question to tackle is whether it is possible to design a forecaster in our setting with a worst-case cumulative regret of $\Thetabound(\Time^\frac{1}{2})$  when the individual experts have no-regret learning dynamics with $\beta = \frac{1}{2}$. In this paper, in order to circumvent the hardness result, we considered the power of blocking/filtering the feedbacks, which can equivalently be achieved with a $1$-bit of communication at every time step. An interesting direction would be to consider other practical ways of coordination and to understand the minimal coordination required to achieve no-regret guarantees.
\bibliography{refs}

\clearpage
\appendix
{\allowdisplaybreaks


\section{Proof of Theorem~\ref{hardness_result}}\label{appendix1_theorem1-proof}

In this section, we give a proof of the hardness result by discussing a generic and simple setting in which any forecaster suffers a positive average regret.

{\bfseries The setting.}
Our setting  consists of two experts $\expert_1$ and $\expert_2$.  The first expert $\expert_1$ has two actions given by $\actionSet_1 = \{a_1, a_2\}$, and the second expert $\expert_1$ has only one action given by $\actionSet_2 = \{b\}$. The action set of the forecaster, or algorithm \generalalgo, is given by $\actionSet = \{a_1, a_2, b\}$. The expert $\expert_1$  plays the \hedge algorithm \citep{freund1995desicion}, \emph{i.e.}, the regret rate parameter is $\beta_1 = 0.5$ (see \eqref{eq.noregretdynamics}); the expert $\expert_2$ has only one action to play as in the standard multi-armed bandit with $\beta_2 = 0$ (see \eqref{eq.noregretdynamics}). The forecaster knows parameter $\beta=0.5$ which upper bounds the regret rate of the individual experts.\footnote{In fact, this hardness result holds even when considering a powerful forecaster which knows exactly the learning algorithms used by the experts, and is able to see the losses $\{l^t(a_1), l^t(a_2), l^t(b)\}$ at every time $t \in [T]$.}

{\bfseries Loss sequences.}
Figures~\ref{fig.hardness-example-L1}, ~\ref{fig.hardness-example-L2}, and ~\ref{fig.hardness-example-L3} shows the \textit{cumulative} loss sequences $L_1$, $L_2$, and $L_3$ for three different scenarios---the adversary at $t=0$ uniformly at random picks one of scenarios and uses that loss sequence. These plots show the cumulative losses of the three actions $\actionSet=\{a_1, a_2, b\}$ for three different sequences. For the first scenario with cumulative loss sequences $L_1$ shown in Figure~\ref{fig.hardness-example-L1}, we show in Figure~\ref{fig.hardness-example-instant-losses}  the instantaneous losses of the different actions. Figures~\ref{fig.hardness-example-loss-a1} and~\ref{fig.hardness-example-loss-a2} show the losses of actions for $\expert_1$; ~\ref{fig.hardness-example-loss-b} shows the losses of action for $\expert_2$.


{\bfseries Model specification.}
To fully specify the model and Protocol~\ref{interaction}, we specify now the feedback vector, and the context over time. The context $\context^t$ is constant over time and plays no role in our setting. The experts receive the following feedback when selected: the expert $\expert_1$ would observe the losses $\{l^t(a_1), l^t(a_2)\}$ when $i^t = 1$; and the  expert $\expert_2$ would observe the loss $\{l^t(b)\}$ when $i^t = 2$.

{\bfseries Execution behavior.} We now divide the time horizon $T$ into different slots and discuss the execution behavior of the forecaster and experts over these time slots. Specifically, let us consider the case where the forecaster is facing the sequence $L_1$ (chosen by adversary with probability $\frac{1}{3}$ at $t=0$) with cumulative losses shown in Figure~\ref{fig.hardness-example-L1} and instantaneous losses of the actions shown in Figure~\ref{fig.hardness-example-instant-losses}. 
For a clarity of presentation, we shall use $\Delta = 0.01$ as a constant in rest of the proof below.

\begin{figure*}[!t]
\centering
   \subfigure[$\expert_1$: Action $a_1$ ($L_1$)]{
     \includegraphics[width=0.31\textwidth]{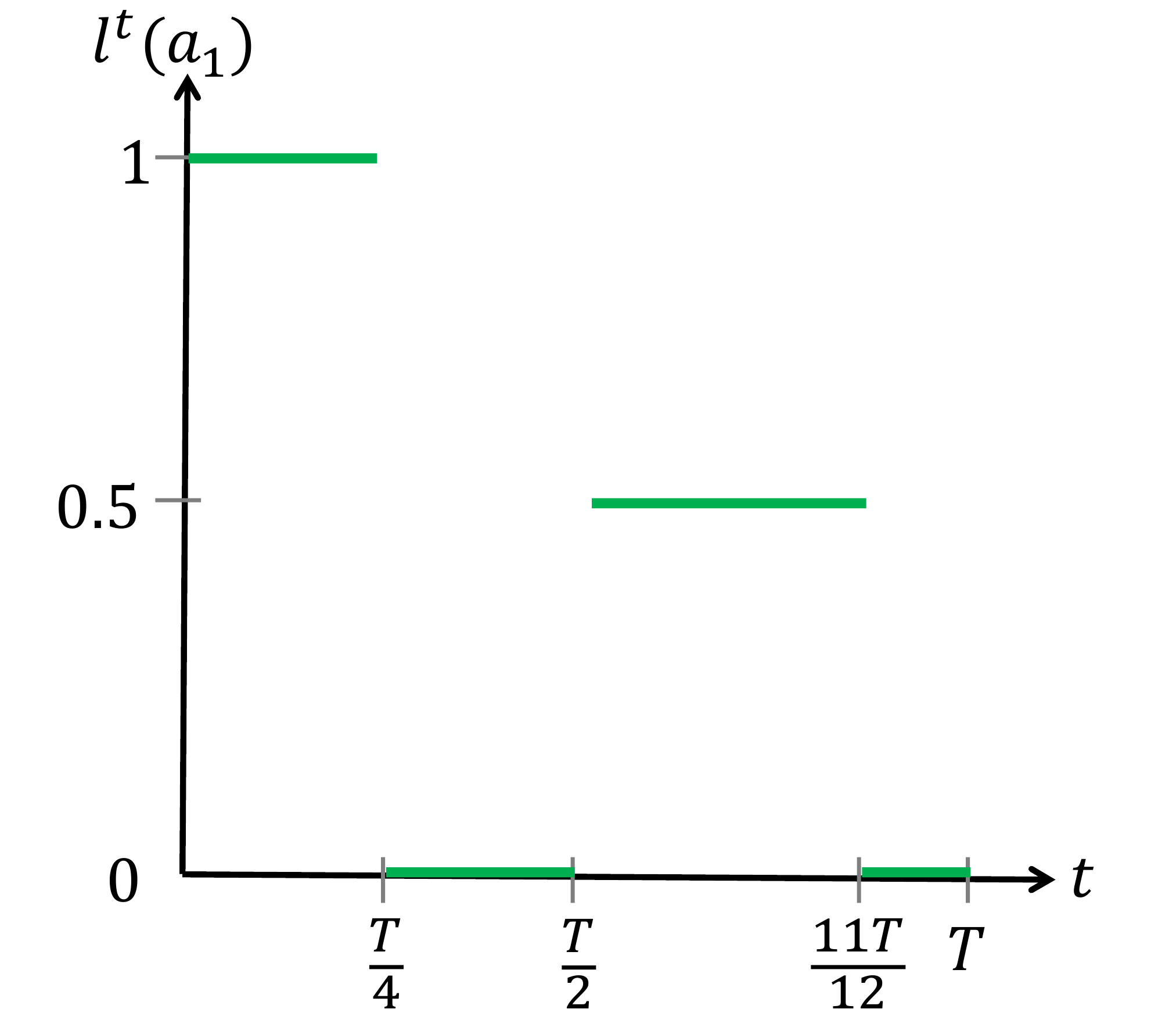}
    \label{fig.hardness-example-loss-a1}
   }
   \subfigure[$\expert_1$: Action $a_2$ ($L_1$)]{
     \includegraphics[width=0.31\textwidth]{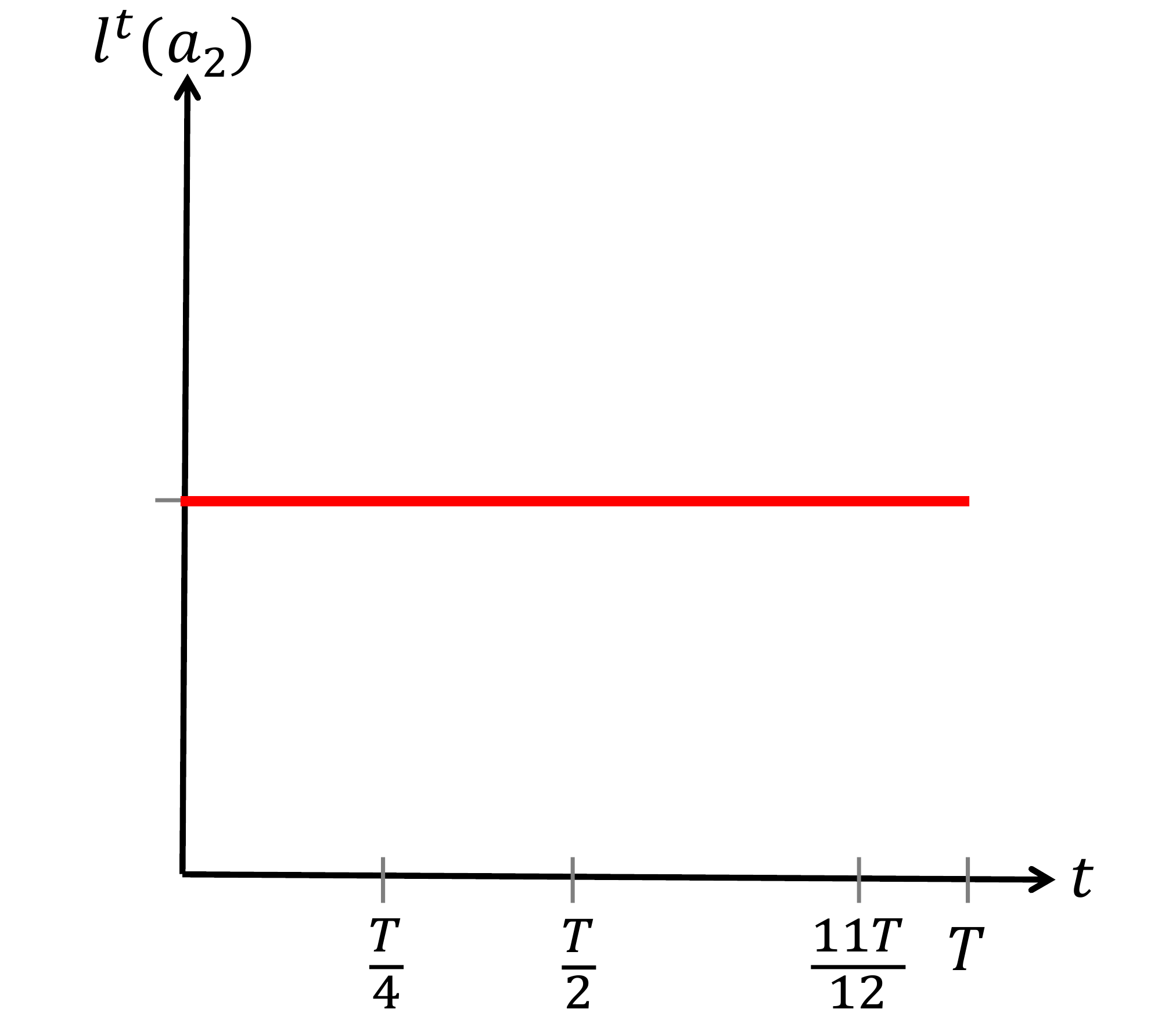}
     \label{fig.hardness-example-loss-a2}
   } 
   \subfigure[$\expert_2$: Action $b$ ($L_3$)]{
     \includegraphics[width=0.31\textwidth]{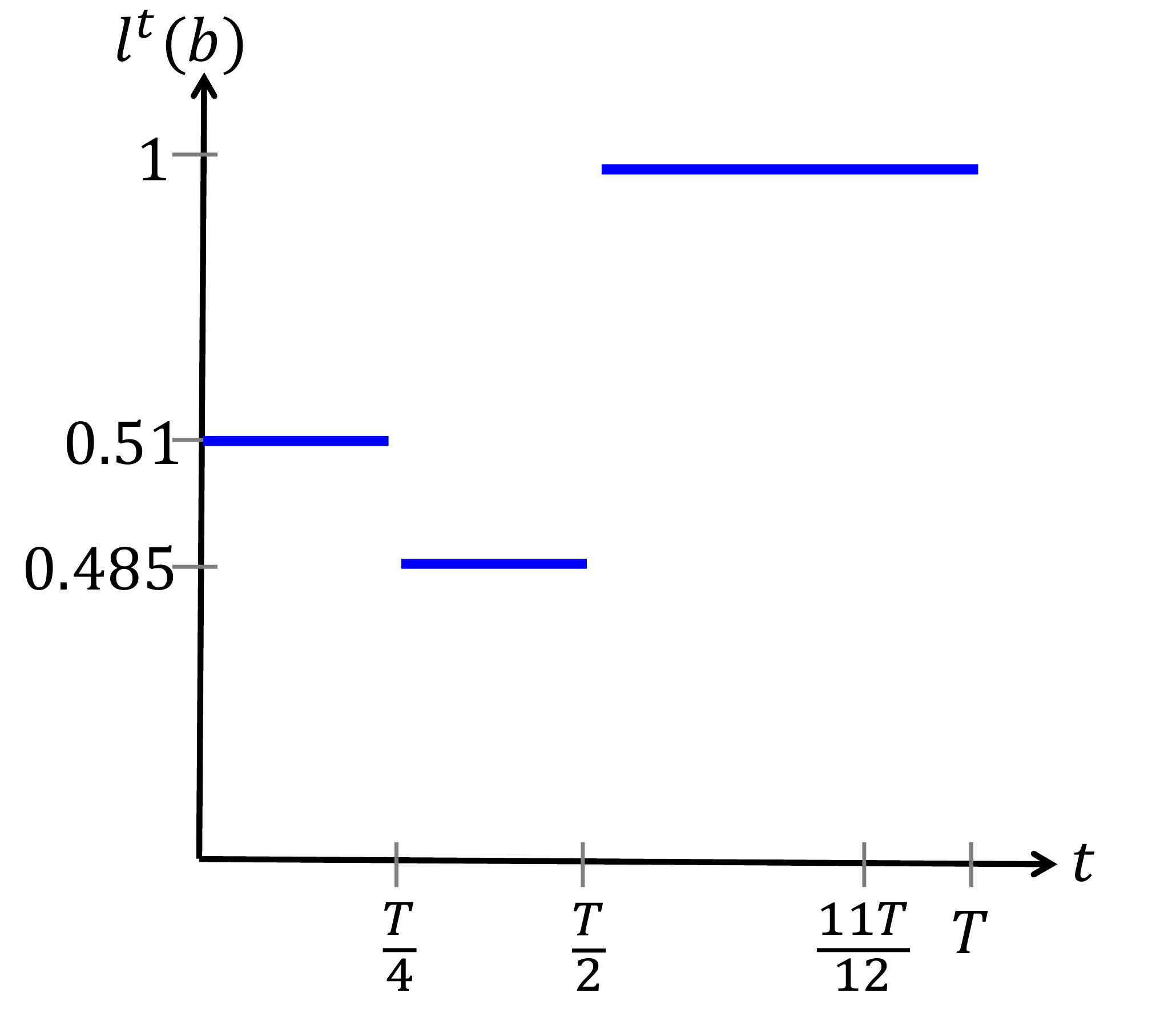}
     \label{fig.hardness-example-loss-b}
   }
\caption{For the first scenario with the cumulative loss sequences $L_1$ shown in Figure~\ref{fig.hardness-example-L1}, here we show the instantaneous losses of the different actions. Figures~\ref{fig.hardness-example-loss-a1} and~\ref{fig.hardness-example-loss-a2} show the losses of actions for $\expert_1$; ~\ref{fig.hardness-example-loss-b} shows the losses of action for $\expert_2$.}
\label{fig.hardness-example-instant-losses}
\end{figure*}

\begin{itemize}
\item $t \in [0, \frac{T}{4}]$: In this time slot, the expert $\expert_1$ would be selected almost surely by the forecaster and the number of times $\expert_2$ would be selected is $\smallObound(T)$. If this is not the case, then this forecaster would suffer positive average regret on the loss sequence $L_3$ for third scenario in Figure~\ref{fig.hardness-example-L3}. \footnote{Note here that the loss sequence $L_3$ is exactly equal to $L_1$ up to time $T/4$. Furthermore, although we have assumed that the setting chosen by the adversary is $L_1$, we should bear in mind that the forecaster  (who can not distinguish between the losses at least up to time $T/4$) should play in a way that it does not suffer positive average regret for $L_3$.} The loss incurred by the forecaster at any time $t$ in this time slot is at least $0.5$.
\item $t \in (\frac{T}{4}, \frac{T}{2}]$: This is the first crucial time slot whereby forecaster's selection strategy would add ``blind spots" to the feedback received by $\expert_1$. The key argument is that in this time slot, the forecaster cannot select expert $\expert_1$ for more than $\frac{T}{6} + o(T)$ timesteps---if this happens, than this forecaster would have a positive average regret on the loss sequence $L_2$ for second scenario in Figure~\ref{fig.hardness-example-L2}. In other words, in this period, the expert $\expert_1$ has missed seeing the feedback for $\frac{T}{12}  - \smallObound(T))$ timesteps. Clearly, the loss incurred by the forecaster at any time $t$ in this time slot is at least $0.5 - \frac{3\Delta}{2}$.
\item $t \in (\frac{T}{2}, \frac{11T}{12}]$: In this time slot, the expert $\expert_1$ would be selected almost surely and the number of times $\expert_2$ would be selected is $\smallObound(T)$. The loss incurred by the forecaster at any time $t$ in this time slot is at least $0.5$.
\item $t \in (\frac{11T}{12}, T]$: This is the second crucial time slot which would lead to the positive average regret for the forecaster. We note that the forecaster still does the ``right" thing in this time slot, \emph{i.e.}, the expert $\expert_1$ would be selected almost surely and the number of times $\expert_2$ would be selected is $\smallObound(T)$. However, the expert $\expert_1$ has missed observing feedback for $(\frac{T}{12} - \smallObound(T))$ time steps in the slot $(\frac{T}{4}, \frac{T}{2}]$. Note also that no coordination is permitted between the forecaster and the experts, and hence, $\expert_1$ is not aware of the time steps that it misses the feedback. As a result, at the start of this time slot, the cumulative loss of action $a_1$ (as perceived by $\expert_1$ based on observed history) is at least $0.5 \cdot \frac{T}{12}$ more than the cumulative loss of action $a_2$ (as perceived by $\expert_1$ based on observed history). The expert $\expert_1$ who is playing $\hedge$ algorithm in our setting would select action $a_2$ almost surely and $a_1$ would be selected $\smallObound(T)$ number of times.
\end{itemize}

{\bfseries Positive average regret.}
Let us now compute the regret of the forecaster when experiencing loss sequence $L_1$ as discussed above. The cumulative loss of the ``best expert in hindsight" is given by that of $\expert_1$ always playing action $a_1$. Based on Figure~\ref{fig.hardness-example-loss-a1}, this is given by:
\begin{align*}
\sum_{t \in [T]} \loss^t(a_1) = 1 \cdot \frac{T}{4} + 0.5 \cdot \big(\frac{11T}{12} - \frac{T}{2}\big)  = T \cdot \big(\frac{1}{2} - \frac{1}{24}\big)
\end{align*}

The cumulative loss of the forecaster as per the execution behavior discussed above can be lower bounded as follows:
\begin{align*}
\sum_{t \in [T]} \loss^t(a^t_{i^t}) &\geq 0.5 \cdot \frac{T}{4} + \big(0.5 - \frac{3 \Delta}{2}\big) \cdot \frac{T}{4} + 0.5 \cdot \big(\frac{11T}{12} - \frac{T}{2}\big) + 0.5 \cdot \big(\frac{T}{12} - \smallObound(T)\big)\\
&= T \cdot \big(\frac{1}{2}  - \frac{3 \Delta}{8} - \frac{\smallObound(T)}{2\cdot T}\big)
\end{align*}

Hence, the total regret of the forecaster is lower bounded by:
\begin{align*}
\regret(\generalalgo, T)  \geq T \cdot \big(\frac{1}{2}  - \frac{3 \Delta}{8} - \frac{\smallObound(T)}{2\cdot T}\big) - T \cdot\big(\frac{1}{2} - \frac{1}{24}\big) = T \cdot \big(\frac{1}{24} - \frac{3 \Delta}{8} - \frac{\smallObound(T)}{2\cdot T} \big)
\end{align*}
Recall that the constant $\Delta = 0.01$, hence the average regret of the forecaster is lower bounded by $\lim_{T \mapsto \infty} \frac{\regret(\generalalgo, T)}{T} \geq \frac{91}{2400}$.

As this sequence $L_1$ is selected by the adversary uniformly at random with probability $\frac{1}{3}$, this means that the forecaster would suffer a positive average regret. As we discussed above, any forecaster which doesn't have the above-mentioned execution behavior in the timeslot $t \in [0, \frac{T}{4}]$ or $t \in (\frac{T}{4}, \frac{T}{2}]$ would suffer a positive average regret for $L_3$ and $L_2$ loss sequences.




\section{Proof of Proposition~\ref{lemma.smooth}}\label{appendix3_lemma1-proof}
\textbf{OCP algorithms.} Assume that and expert $\expert_j$ is performing Online  Convex  Programming (OCP) via greedy projections. We will show that such an algorithm has smooth learning dynamics. Note that OCP has regret of  size $O(\sqrt{T})$ (i.e. $\beta_j = 1/2$). Consider the $\alpha$-OCP algorithm that proceeds according to Algorithm~\ref{alphaocp}. Proving smooth learning dynamics for OCP is equivalent to showing that $\alpha$-OCP suffers a regret of size $O(\sqrt{T/\alpha})$. More precisely, we have the following Lemma.  

\renewcommand{\algorithmcfname}{Algorithm}
\begin{algorithm2e}[h!]
\nl {\bf Problem setting}: {Convex set $\paramSet$; sequence of convex loss functions $f^t : \paramSet \rightarrow \R_{+}$} \\
\nl {\bf Parameters}: Learning rates $\eta^t$ for $t \in [T]$ \\
\nl {\bf Initialize}: $w_0 \in S$ arbitrarily \\
  \ForEach {$t = 1, 2, \ldots, T$}{
	\nl $w^{t+1/2} = w^t - \eta^t B^t z^t $ where:    (i) $z^t \in \partial f^t(\param^t)$, and (ii) random variables $B^t$ are independent Bernoulli with
	parameter $\alpha$ (i.e. $\text{Pr}(B^t =1) = 1 - \text{Pr}(B^t = 0) = \alpha$), and (iii) $\eta_t = 1/ \sqrt{1 + \sum_{\tau =1}^t B^\tau}$\\	
	\vspace{.3cm}	
	\nl $w^{t+1} = \text{Proj}_\paramSet(w^{t+1/2})$
  }
  \caption{$\alpha$-OCP}
  \label{alphaocp} 
\end{algorithm2e}

\begin{lemma}
Let $\norm{S}$ denote the diameter of the convex set $\paramSet$ and $L$ denotes an upper bound on the magnitude of the gradient at any time $t \in \Time$. Then, the expected regret of the $\alpha$-OCP algorithm is given by 
\begin{equation}
\mathbb{E} [\sum_{t=1}^T ( f^t (w^t) - f^t(u)) ] \leq \frac{\norm{S}^2}{2} \cdot  \sqrt{\frac{T}{\alpha}} +  L^2 \cdot \sqrt{\frac{T}{\alpha}}
\end{equation}
where the expectation is w.r.t. the sequence of Bernoulli random variables $B^t$ for $t \in [\Time]$.
\end{lemma}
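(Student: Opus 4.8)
The plan is to adapt Zinkevich's analysis of online gradient descent with greedy projections to the randomly subsampled, adaptively scheduled updates of $\alpha$-OCP, and only afterwards take expectations. First I would record the one-step inequality. Since $\text{Proj}_\paramSet$ is non-expansive and $u \in \paramSet$, for every $t$
\[
\norm{w^{t+1}-u}^2 \leq \norm{w^t - \eta^t B^t z^t - u}^2 = \norm{w^t-u}^2 - 2\eta^t B^t \langle z^t, w^t-u\rangle + (\eta^t)^2 (B^t)^2 \norm{z^t}^2 .
\]
Using $(B^t)^2 = B^t$ (as $B^t \in \{0,1\}$) and rearranging gives
\[
B^t \langle z^t, w^t-u\rangle \leq \frac{\norm{w^t-u}^2 - \norm{w^{t+1}-u}^2}{2\eta^t} + \frac{\eta^t B^t}{2}\norm{z^t}^2 .
\]
Convexity of $f^t$ with $z^t \in \partial f^t(w^t)$ yields $f^t(w^t)-f^t(u) \leq \langle z^t, w^t-u\rangle$, and since $B^t \geq 0$ also $B^t(f^t(w^t)-f^t(u)) \leq B^t\langle z^t, w^t-u\rangle$.

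The crux is the probabilistic bookkeeping that converts the subsampled regret back into the full regret. The iterate $w^t$ and its subgradient $z^t$ are measurable functions of $B^1,\ldots,B^{t-1}$ only (no update uses $B^t$ before $w^t$ is formed), whereas $B^t$ is independent of the past with $\E[B^t]=\smoothRate$. Hence, conditioning on $B^1,\ldots,B^{t-1}$ and applying the tower rule, $\E[B^t(f^t(w^t)-f^t(u))] = \smoothRate\,\E[f^t(w^t)-f^t(u)]$. Summing over $t$ and combining with the inequality above,
\[
\smoothRate\,\E\Big[\sum_{t=1}^T (f^t(w^t)-f^t(u))\Big] \leq \E\Big[\sum_{t=1}^T \frac{\norm{w^t-u}^2-\norm{w^{t+1}-u}^2}{2\eta^t}\Big] + \E\Big[\sum_{t=1}^T \frac{\eta^t B^t}{2}\norm{z^t}^2\Big].
\]

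Next I would bound the two right-hand sums \emph{pathwise}, i.e. for a fixed realization of $B^1,\ldots,B^T$, taking expectations only at the end. Writing $m_t := \sum_{\tau=1}^t B^\tau$, we have $\eta^t = 1/\sqrt{1+m_t}$, so $1/\eta^t = \sqrt{1+m_t}$ is non-decreasing in $t$ (and when $B^t=0$ the algorithm idles, $w^{t+1}=\text{Proj}_\paramSet(w^t)=w^t$, so the corresponding telescoping term vanishes). An Abel summation on the first sum, bounding each $\norm{w^t-u}^2 \leq \norm{\paramSet}^2$ and discarding the final non-negative term, telescopes to $\norm{\paramSet}^2/(2\eta^T) = \tfrac{\norm{\paramSet}^2}{2}\sqrt{1+m_T}$. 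For the second sum, $\norm{z^t}^2 \leq L^2$ and the nonzero terms occur exactly at the update times, where $m_t$ runs through $1,2,\ldots,m_T$, so $\sum_t \eta^t B^t = \sum_{k=1}^{m_T}(1+k)^{-1/2} \leq \sum_{k=1}^{m_T} k^{-1/2} \leq 2\sqrt{m_T}$, contributing $L^2\sqrt{m_T}$. Taking expectations and applying Jensen's inequality to the concave map $x \mapsto \sqrt{x}$ with $\E[m_T] = \smoothRate T$ gives $\E[\sqrt{1+m_T}] \leq \sqrt{1+\smoothRate T}$ and $\E[\sqrt{m_T}] \leq \sqrt{\smoothRate T}$. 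Dividing through by $\smoothRate$,
\[
\E\Big[\sum_{t=1}^T (f^t(w^t)-f^t(u))\Big] \leq \frac{\norm{\paramSet}^2}{2\smoothRate}\sqrt{1+\smoothRate T} + \frac{L^2}{\smoothRate}\sqrt{\smoothRate T} \leq \frac{\norm{\paramSet}^2}{2}\sqrt{\frac{T}{\smoothRate}} + L^2\sqrt{\frac{T}{\smoothRate}},
\]
absorbing the harmless $+1$ for $T$ large, which is exactly the smooth rate \eqref{eq.smoothdynamics} with $\regretRate_j = 1/2$.

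The main obstacle I anticipate is the coupling between the random step size $\eta^t$ and the Bernoulli $B^t$: because $\eta^t$ depends on $m_t$, hence on $B^t$, one cannot naively factor $\smoothRate$ out of the right-hand side. The clean remedy is the order chosen above --- extract $\smoothRate$ only on the left, where $w^t$ and $z^t$ are independent of $B^t$; bound the right deterministically along each sample path so that all dependence on $B^{1:T}$ is concentrated in the single scalar $m_T$; and close with one application of Jensen. A secondary point to verify carefully is the idling identity $w^{t+1}=w^t$ on non-update steps, since it is what makes both the telescoping and the ``update-times only'' counting of the second sum exact.
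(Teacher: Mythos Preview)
Your argument is correct and, in fact, slightly cleaner than the paper's. The paper proceeds by rewriting the update as $w^{t+1/2}=w^t-\tilde\eta^t\tilde z^t$ with $\tilde\eta^t=\alpha\eta^t$ and the \emph{importance-weighted} gradient $\tilde z^t=(B^t/\alpha)z^t$, so that $\E[\tilde z^t\mid w^{1:t}]\in\partial f^t(w^t)$; it then runs the standard stochastic-OGD telescoping, bounds $\E[1/\eta_T]\le\sqrt{\alpha T}$ via Jensen, and controls each $\E[\eta_t]$ by a multiplicative Chernoff bound on $\sum_{\tau\le t}B^\tau$, picking up an additional additive $O(1/\alpha)$ term. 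Your route differs in two ways: (i) instead of rescaling the gradient, you multiply the convexity inequality by $B^t$ and use the independence of $B^t$ from $(w^t,z^t)$ to extract the factor $\alpha$ on the \emph{left}; (ii) on the right you bound both sums \emph{pathwise}, collapsing all randomness into the single scalar $m_T=\sum_{\tau\le T}B^\tau$, so one application of Jensen to $x\mapsto\sqrt{x}$ suffices and no Chernoff bound is needed. This sidesteps precisely the coupling issue you flagged (the paper's conditioning on $w^{1:t}$ is slightly delicate because $\eta_t$ itself depends on $B^t$), and yields the stated bound up to the same harmless $+1$ under the square root that both proofs incur. The idling identity $w^{t+1}=w^t$ when $B^t=0$ indeed holds since $w^t\in\paramSet$ already, and your Abel summation only requires the monotonicity of $1/\eta^t$, which follows from $m_t$ being non-decreasing.
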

\begin{proof}
We can equivalently write the updates in the $\alpha$-OCP procedure as follows:
\begin{align*}
    w^{t + 1/2} &= w^t - \eta^t B^t z^t \\
                &= w^t - (\eta^t \cdot \alpha) \cdot (\frac{B^t}{\alpha}) z^t \\
                &= w^t - \tilde{\eta}^t \cdot \tilde{z^t}
\end{align*}
where $\tilde{\eta}^t = (\eta^t \cdot \alpha)$ and $\tilde{z^t} = (\frac{B^t}{\alpha}) z^t$. Note that $\mathbb{E}[\tilde{z^t} | w_{1:t}] = \mathbb{E}[z^t] \in  \partial f^t(w^t)$. We have:
\begin{align}
&\mathbb{E} \bigg[\sum_{t=1}^T ( f^t (w^t) - f^t(u)) \bigg] \\
&= \mathbb{E} \bigg[\sum_{t=1}^T \mathbb{E}\Big[( f^t (w^t) - f^t(u)) | w_{1:t}\Big] \bigg] \\
&\leq \mathbb{E}  \bigg[\sum_{t=1}^T \mathbb{E}\Big[<\partial f^t(w^t), w^t -u> | w_{1:t}\Big]  \bigg] \\
&=\mathbb{E} \bigg[\sum_{t=1}^T \mathbb{E}\Big[<\tilde{z}^t, w^t -u> | w_{1:t}\Big] \bigg] \\
&=\mathbb{E} \bigg[\sum_{t=1}^T \frac{1}{2 \alpha\eta^t }\mathbb{E}\Big[\norm{w^t - w}^2 - \norm{w^{t+1/2} - w}^2 + \eta_t^2 \alpha^2 \norm{\tilde{z^t}}^2 | w_{1:t}\Big] \bigg] \\
&\leq\mathbb{E} \bigg[\sum_{t=1}^T \frac{1}{2 \alpha \eta^t }\mathbb{E}\Big[\norm{w^t - w}^2 - \norm{w^{t+1} - w}^2 + \eta_t^2 \alpha^2 \norm{\tilde{z^t}}^2 | w_{1:t}\Big] \bigg] \\
&= \mathbb{E} \bigg[\sum_{t=1}^T \frac{\norm{w^t - w}^2}{2\alpha \eta^t} - \frac{\norm{w^{t+1} - w}^2}{2 \alpha \eta^t}\bigg]+ \frac{\alpha}{2}\mathbb{E} \bigg[ \sum_{t=1}^T \eta_t || \theta_t || ^2  \bigg] \\
& \leq  \mathbb{E} \bigg[ \frac{\norm{w^1 - w}^2}{2 \alpha \eta^1} - \frac{||w^{T+1} - w||^2}{2\alpha \eta_T}  + \frac 12 \sum_{t=2}^T \norm{w^{t} - w}^2 ( \frac{1}{\alpha \eta^t} - \frac{1}{\alpha \eta^{t-1}} ) \bigg]
+ \frac{\alpha}{2}\mathbb{E} \bigg[ \sum_{t=1}^T \eta_t || \tilde{z}^t  || ^2  \bigg] \\
&\leq ||S||^2 \mathbb{E}\bigg[\frac{1}{2\alpha\eta_T}\bigg] + \frac{\alpha}{2}\mathbb{E} \bigg[ \sum_{t=1}^T \eta_t || \tilde{z}^t  || ^2  \bigg]
\end{align}
Now note that $\mathbb{E}[\eta_t||\tilde{z}^t ||^2] = \mathbb{E}\bigl[\mathbb{E}[\eta_t||\tilde{z}^t ||^2 | w^{1:t}]\bigr]    = \mathbb{E}\bigl[ \eta_t||z^t||^2 / \alpha] \leq L^2\mathbb{E}[\eta_t] / \alpha $. We thus obtain
\begin{align}
&\mathbb{E} \bigg[\sum_{t=1}^T ( f^t (w^t) - f^t(u)) \bigg] 
\leq ||S||^2 \mathbb{E}\bigg[\frac{1}{2\alpha\eta_T}\bigg] + \frac{L^2}{2}\mathbb{E} \bigg[ \sum_{t=1}^T \eta_t  \bigg].
\end{align}
We next recall that $\eta_t = \frac{1}{\sqrt{1+\sum_{\tau=1}^t B^\tau}}$. By using the multiplicative Chernoff bound (as $B^\tau$'s are Bernoulli random variables) we obtain 
$$\text{Pr} ( \eta_t \geq \sqrt{2 / (t  \alpha)}) = \text{Pr} ( \sum_{\tau=1}^t B^\tau \leq t\alpha/2) \leq \exp(- \frac{t\alpha}{12}).$$
Hence, we obtain $\mathbb{E}[\eta_t] \leq \sqrt{2 / (\alpha t)} + \exp(-t\alpha/12)$.
Also, due to concavity of the function $h(x) = \sqrt{x}$, we have that $\mathbb{E}[1/\eta_T] \leq \sqrt{T\alpha}$. We finally obtain
\begin{align*}
\mathbb{E} \bigg[\sum_{t=1}^T ( f^t (w^t) - f^t(u)) \bigg] 
&\leq ||S||^2 \sqrt{ T / \alpha} + L^2 \sqrt{ T/\alpha} +  \sum_{t=1}^T\exp{(-t\alpha/12)} \\
&\leq ||S||^2 \sqrt{ T / \alpha} + L^2 \sqrt{ T/\alpha} +  1 / (1 - \exp{(-\alpha/12)}) \\
&  \leq ||S||^2 \sqrt{ T / \alpha} + L^2 \sqrt{ T/\alpha} + 24/\alpha,
\end{align*}
where the last line is because $1/(1 - \exp(-\alpha/12)) \leq 24/\alpha $ for $\alpha \leq 1$. 
\end{proof}

\textbf{OMD Algorithms.} We now consider  the case that expert $\expert_j$ is performing an algorithm inside the Online Mirror Descent (OMD) family of algorithms. We assume that the algorithm has a regret of order $O(\sqrt{T})$ for any time horizon $T$ (i.e. $\beta_j = 1/2$). We also assume that the algorithm uses the doubling trick. Consider the standard online learning scenario where at any time $t \in [T]$ a convex function $f^t : S \to \mathbb{R}$ is assigned ($S$ is assumed to be a convex region). The proofs proceeds in 3 steps. 

{{\bfseries Step~1. $\alpha$-OMD with a fixed time horizon}\\
We first analyze the algorithm $\alpha$-OMD given in \ref{alphaomd} which is run for a fixed (deterministic) number of steps.

\renewcommand{\algorithmcfname}{Algorithm}
\begin{algorithm2e}[h!]
\nl {\bf Problem setting}: {Convex set $\paramSet$; sequence of convex loss functions $f^t : \paramSet \rightarrow \R_{+}$} \\
\nl {\bf Parameters}: {a link function $g: \R^d \rightarrow \paramSet$; time horizon $\omdT$} \\
\nl {\bf Initialize}: {time $\tau = 1$, auxiliary variable $\theta^\tau = \mathbf{0} \in \R^d$}\\
  \ForEach {$\tau = 1, 2, \ldots, \omdT$}{
	\nl Predict vector $\param^\tau = g(\theta^\tau)$ \\
	\nl Update  $\theta^{\tau+1} = \theta^\tau - B^\tau z^\tau$ where:  (a) $z^\tau \in \partial f^\tau(\param^\tau)$, (b) $B^\tau$ is an independent Bernoulli random variable with parameter $\alpha$ (i.e. $\text{Pr}(B^\tau=1) = 1 - \text{Pr}(B^\tau = 0) = \alpha$).		
  }
  \caption{$\alpha$-OMD}
  \label{alphaomd} 
\end{algorithm2e}

\begin{lemma} \label{omd-fiex}
Let $R$ be a $1/\eta$- strongly convex function over $S$ with
respect to a norm $|| \cdot ||$. Assume that $\alpha$-OMD is run on the sequence with
a link function
\begin{align*}
g(\theta) = \argmax_{w \in S} ( \langle w,\theta \rangle - R(w))
\end{align*}
Furthermore, assume that $f^t$ is $L$-Lipshitz with respect to norm $||\cdot||$. Then
\begin{equation}
\mathbb{E} [\sum_{t=1}^T ( f^t (w^t) - f^t(u)) ] \leq R(u) / \alpha + \\
\eta T L^2 . 
\end{equation}
\end{lemma}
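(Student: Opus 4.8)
The plan is to reduce the analysis of $\alpha$-OMD to the standard (deterministic) regret bound for Online Mirror Descent, applied path-wise along each realization of the Bernoulli mask $(B^\tau)_{\tau \in [T]}$, and then to take expectations. The key observation is that the update $\theta^{\tau+1} = \theta^\tau - B^\tau z^\tau$ is \emph{exactly} an OMD update driven by the ``masked'' vector $\gamma^\tau \coloneqq B^\tau z^\tau$. Since $g(\theta) = \argmax_{w \in S}(\langle w, \theta\rangle - R(w)) = \nabla R^*(\theta)$ and $R$ is $1/\eta$-strongly convex w.r.t. $\norm{\cdot}$, the standard OMD guarantee \citep{shalev2011online}---which is a purely geometric statement about the mirror-descent iterates and does \emph{not} require the driving vectors $\gamma^\tau$ to be genuine subgradients---gives, for every fixed realization of the masks and any $u \in S$,
\[
\sum_{\tau=1}^T \langle \gamma^\tau, w^\tau - u\rangle \leq R(u) - \min_{w \in S} R(w) + \frac{\eta}{2}\sum_{\tau=1}^T \norm{\gamma^\tau}_*^2 \leq R(u) + \frac{\eta}{2}\sum_{\tau=1}^T \norm{\gamma^\tau}_*^2 ,
\]
where the last step uses the usual normalization $\min_{w} R(w) \geq 0$ (so that $w^1 = g(\mathbf{0})$ minimizes $R$).

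Next I would take expectations over the masks and convert the masked inner products back into the quantity of interest. Writing $\tilde{z}^\tau = \frac{B^\tau}{\alpha} z^\tau$ exactly as in the $\alpha$-OCP argument, we have $\gamma^\tau = \alpha \tilde{z}^\tau$ and $\mathbb{E}[\tilde{z}^\tau \mid w^{1:\tau}] = z^\tau \in \partial f^\tau(w^\tau)$, because $B^\tau$ is independent of the history $w^1, \ldots, w^\tau$ (the iterate $w^\tau$ depends only on $B^1, \ldots, B^{\tau-1}$). Hence $\mathbb{E}[\langle \gamma^\tau, w^\tau - u\rangle] = \alpha\, \mathbb{E}[\langle z^\tau, w^\tau - u\rangle]$, and by convexity together with $z^\tau \in \partial f^\tau(w^\tau)$,
\[
\alpha\, \mathbb{E}\Big[\sum_{\tau=1}^T \big(f^\tau(w^\tau) - f^\tau(u)\big)\Big] \leq \alpha\, \mathbb{E}\Big[\sum_{\tau=1}^T \langle z^\tau, w^\tau - u\rangle\Big] = \mathbb{E}\Big[\sum_{\tau=1}^T \langle \gamma^\tau, w^\tau - u\rangle\Big] .
\]

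Finally I would control the gradient-norm term. Since $B^\tau \in \{0,1\}$ we have $(B^\tau)^2 = B^\tau$, so $\norm{\gamma^\tau}_*^2 = B^\tau \norm{z^\tau}_*^2$, and $L$-Lipschitzness gives $\norm{z^\tau}_* \leq L$; taking expectations with $\mathbb{E}[B^\tau] = \alpha$ yields $\mathbb{E}[\norm{\gamma^\tau}_*^2] \leq \alpha L^2$. Combining the path-wise OMD bound with the two displays above gives
\[
\alpha\, \mathbb{E}\Big[\sum_{\tau=1}^T \big(f^\tau(w^\tau) - f^\tau(u)\big)\Big] \leq R(u) + \frac{\eta}{2}\, \alpha T L^2 ,
\]
and dividing through by $\alpha$ yields $\mathbb{E}[\sum_\tau (f^\tau(w^\tau) - f^\tau(u))] \leq R(u)/\alpha + \frac{\eta}{2} T L^2 \leq R(u)/\alpha + \eta T L^2$, as claimed. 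The main obstacle, and the only genuinely delicate point, is the interchange of the path-wise (deterministic) OMD inequality with the expectation over the masks: one must apply the OMD bound for \emph{each} realization of $(B^\tau)$ and then verify the independence structure ($B^\tau$ independent of $w^\tau$) that permits factoring out $\alpha$ in both the linear term and the quadratic gradient term. Everything else is a direct substitution into the standard OMD analysis, mirroring the $\alpha$-OCP proof above.
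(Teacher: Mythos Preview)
Your argument is correct, but it follows a different route from the paper's. The paper does not apply the deterministic OMD inequality path-wise; instead it rescales the entire procedure. Concretely, the paper introduces $\tilde{\theta}^\tau = \theta^\tau/\alpha$, the modified link $\tilde{g}(\theta) = \argmax_{w \in S}(\langle w,\theta\rangle - R(w)/\alpha)$, and the rescaled updates $\tilde{z}^\tau = (B^\tau/\alpha) z^\tau$, then observes that $\tilde{w}^\tau = w^\tau$ and that $(\tilde{\theta}^\tau,\tilde{w}^\tau)$ is exactly a \emph{stochastic} OMD with regularizer $R/\alpha$ (which is $1/(\eta\alpha)$-strongly convex) and unbiased gradient estimates $\tilde{z}^\tau$. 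It then invokes Theorem~4.1 of \citet{shalev2011online} as a black box to get $R(u)/\alpha + \eta\alpha \sum_\tau \mathbb{E}[\norm{\tilde{z}^\tau}^2]$, and finishes with $\mathbb{E}[\norm{\tilde{z}^\tau}^2] \leq L^2/\alpha$.

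The two approaches differ in where the factor $\alpha$ enters. In the paper it is absorbed into the regularizer and the gradient estimate up front, so one can cite the stochastic OMD theorem directly; in your version it appears only at the expectation step, after a path-wise use of the purely geometric OMD inequality on the masked vectors $\gamma^\tau = B^\tau z^\tau$. Your route is slightly more elementary (it needs only the deterministic linear OMD bound plus the independence of $B^\tau$ from $w^\tau$), and it even yields the constant $\eta T L^2/2$ before you relax it. The paper's route has the advantage of being a one-line reduction to an existing stochastic-OMD theorem. Either way the same bound $R(u)/\alpha + \eta T L^2$ falls out.
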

\begin{proof}
For the sake of analysis, we introduce the following slightly modified procedure:
\begin{enumerate}
\item Initialize $\tilde{\theta}^1 = \theta^1 / \alpha$.
\item At time $\tau = 1, 2, \cdots, T$, let $\tilde{w}^\tau = \tilde{g}(\tilde{\theta}^\tau)$, and $\tilde{\theta}^{\tau+1} = \tilde{\theta}^\tau - \tilde{z}^\tau$. Here, we have $\tilde{z}^\tau = \frac{B^\tau}{\alpha} z^\tau$, and the function $\tilde{g}$ is defined as $\tilde{g}(\theta) = \argmax_{w \in S} ( \langle w,\theta \rangle - R(w)/\alpha) $.
\end{enumerate}
It is straight forward to justify for any $\tau \in [T]$ that $\tilde{\theta}^\tau = \theta^\tau / \alpha$ and $\tilde{w}^\tau = w^\tau$. Also note that $\mathbb{E}[ \tilde{z}^\tau  | \tilde{z}^{1:\tau-1} ] = z^\tau \in  \partial f^\tau(\param^\tau)$. Hence, the modified procedure ($\tilde{\theta}^\tau, \tilde{w}^\tau$) is precisely a stochastic OMD procedure with with link function $\tilde{g}$.  By using Theorem 4.1 in \cite{shalev2011online}, $\tilde{w}^\tau = w^\tau$, and the fact that $R(\cdot)/\alpha$ is a $1/(\eta \alpha)$-strongly convex function, we obtain
\begin{align*}
\mathbb{E} [\sum_{t=1}^T ( f^t (w^t) - f^t(u)) ] 
&\leq \sup_{u \in S} R(u)/\alpha + \eta \alpha \sum_{\tau = 1}^T  \mathbb{E} [|| \tilde{z}^\tau ||^2].
\end{align*}
We finally note that 
$$\mathbb{E}[||\tilde{z}^\tau||^2] = \mathbb{E}\bigl[\mathbb{E}[||\tilde{z}^\tau||^2 \, | \, \tilde{z}^{1:\tau-1} ] \bigr] \leq L^2 / \alpha.$$
The result of the Lemma is now immediate. 
\end{proof}

{{\bfseries Step~2. $\alpha$-OMD with a random time horizon}\\ 
From Lemma~\ref{omd-fiex}, for $\eta = O(1/\sqrt{T})$, the algorithm $\alpha$-OMD suffers a $O(\sqrt{T/\alpha})$ regret after any fixed time $T$. Recall now that at any time the algorithm is only given feedback with independent probability $\alpha$. We are assuming that the algorithm used by the expert $\expert_j$ is performing the doubling trick, i.e., it runs in blocks whose size get doubled consecutively and within each block the learning rate is fixed. As a result, after the algorithm receives sufficient feedback to finish a block, it restarts OMD and changes the learning rate for the next block (which has twice the size). In order to analayze the regret suffered in each block, we need to consider a slightly different version of $\alpha$-OMD which stops after a randomly chosen time.

\begin{lemma} [$\alpha$-OMD with a random time horizon] \label{OMD-random}
Assume that we run the $\alpha$-OMD procedure until the time, call it $T_{\rm stop}$, such that following stopping criterion has been fulfilled: 
\begin{equation}
\sum_{\tau = 1}^{T_{\rm stop}} B^\tau = M.
\end{equation}
We the have
\begin{equation}
\mathbb{E} [\sum_{t=1}^{T_{\rm stop}} ( f^t (w^t) - f^t(u)) ]  \leq R(u) / \alpha + \\ \eta M L^2/\alpha + 14 L ||S||\sqrt{M/\alpha^2},
\end{equation}
where $\norm{S}$ denote the diameter of the convex set $\paramSet$ and $L$ denotes an upper bound on the Lipshitz parameter of all the functions $f_t$.
\end{lemma}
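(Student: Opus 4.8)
The plan is to deduce the random-horizon guarantee from the fixed-horizon bound of Lemma~\ref{omd-fiex} by coupling the two runs and paying only a small correction for the fluctuation of the stopping time $T_{\rm stop}$ around its mean. First I would identify the law of $T_{\rm stop}$: since each step produces an update independently with probability $\alpha$ and we stop exactly when the $M$-th update occurs, $T_{\rm stop}$ is a sum of $M$ i.i.d.\ geometric($\alpha$) random variables (a negative-binomial variable). Hence $\mathbb{E}[T_{\rm stop}] = M/\alpha$ and $\mathrm{Var}(T_{\rm stop}) = M(1-\alpha)/\alpha^2 \le M/\alpha^2$. I then fix the \emph{deterministic} horizon $T_0 := M/\alpha$ (rounding to an integer if needed) as the pivot for the comparison.

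Next I would couple the random-horizon run of $\alpha$-OMD with a fixed-horizon run on the \emph{same} realization of the Bernoulli switches $B^\tau$ and the same subgradients. Because the iterate $w^t$ is a deterministic function of these quantities, both runs generate an identical iterate sequence, so the two cumulative losses differ only over the time steps lying strictly between $\min(T_{\rm stop},T_0)$ and $\max(T_{\rm stop},T_0)$. Using $L$-Lipschitzness of each $f^t$ together with $\norm{w^t-u}\le\norm{S}$, every per-step term is controlled by $|f^t(w^t)-f^t(u)|\le L\norm{S}$, giving the pathwise estimate
\begin{equation*}
\Big|\sum_{t=1}^{T_{\rm stop}}\big(f^t(w^t)-f^t(u)\big)-\sum_{t=1}^{T_0}\big(f^t(w^t)-f^t(u)\big)\Big|\le L\norm{S}\,|T_{\rm stop}-T_0|.
\end{equation*}
Taking expectations, I would bound the fixed-horizon sum by Lemma~\ref{omd-fiex} applied at $T_0=M/\alpha$, which contributes exactly $R(u)/\alpha + \eta M L^2/\alpha$ (the first two target terms), and bound the correction via Cauchy--Schwarz,
\begin{equation*}
\mathbb{E}\big[|T_{\rm stop}-T_0|\big]\le\sqrt{\mathrm{Var}(T_{\rm stop})}\le\sqrt{M/\alpha^2},
\end{equation*}
so that the correction contributes $O\big(L\norm{S}\sqrt{M/\alpha^2}\big)$, matching the third term. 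The stated constant $14$ is slack absorbing the integer rounding of $T_0$ and any looseness in the tail/variance estimate, and is not essential to the argument.

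I expect the main obstacle to be the clean handling of the \emph{random} stopping time. The naive move, applying Lemma~\ref{omd-fiex} directly ``at $T=T_{\rm stop}$,'' is not licensed, since that lemma is an expectation bound for a deterministic horizon and the number of updates collected by any deterministic time is itself random. The resolution is precisely the coupling-through-$T_0$ pivot above: it lets one keep the strongly-convex/OMD machinery entirely inside the deterministic-horizon lemma and relegate all the stopping-time randomness to the cheap, Lipschitz-controlled window term, whose expectation is governed by the negative-binomial variance. Once the coupling is set up, verifying $\mathbb{E}[T_{\rm stop}]=M/\alpha$, the variance bound, and the exact constant are routine.
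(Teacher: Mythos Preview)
Your proposal is correct and follows the same high-level decomposition as the paper: pivot at the deterministic horizon $T_0=M/\alpha$, apply Lemma~\ref{omd-fiex} there to obtain $R(u)/\alpha+\eta M L^2/\alpha$, and control the remainder by $L\norm{S}\,\mathbb{E}\bigl[|T_{\rm stop}-M/\alpha|\bigr]$ via Lipschitzness and the diameter bound.

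The one substantive difference is in how the mean absolute deviation $\mathbb{E}\bigl[|T_{\rm stop}-M/\alpha|\bigr]$ is bounded. The paper does not invoke the negative-binomial structure directly; instead it writes the tail events $\{T_{\rm stop}\ge M/\alpha+\beta\}$ and $\{T_{\rm stop}\le M/\alpha-\beta\}$ as deviations of a Bernoulli sum, applies the multiplicative Chernoff bound, and then sums the resulting exponential tails, arriving at the constant $14$. Your route---recognize $T_{\rm stop}$ as negative binomial, use $\mathrm{Var}(T_{\rm stop})=M(1-\alpha)/\alpha^2\le M/\alpha^2$, and apply $\mathbb{E}|X-\mathbb{E}X|\le\sqrt{\mathrm{Var}(X)}$---is shorter, more elementary, and in fact yields the correction term with constant $1$ rather than $14$. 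Both arguments are valid; yours buys a cleaner proof and a sharper constant, while the paper's buys exponential tail bounds (not actually needed for the stated lemma).
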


\begin{proof}
We can write 
\begin{align*}
& \mathbb{E} [\sum_{t=1}^{T_{\rm stop}} ( f^t (w^t) - f^t(u)) ] \\
& = \mathbb{E} \bigl[\sum_{t=1}^{M/\alpha} ( f^t (w^t) - f^t(u)) \bigr] 
- \biggl( \mathbb{E} [\sum_{t=1}^{M/\alpha} ( f^t (w^t) - f^t(u)) ] - \mathbb{E} [\sum_{t=1}^{T_{\rm stop}} ( f^t (w^t) - f^t(u)) ] \biggr) \\
&\leq  \mathbb{E} [\sum_{t=1}^{M/\alpha} ( f^t (w^t) - f^t(u)) ] 
+  L ||S|| \times  \mathbb{E} [|T_{\rm stop} - M/\alpha|], 
\end{align*}
where the last step follows from the fact that for any two $u,v \in S$ we have $|f(u) - f(v) | \leq L ||S||$. The first term above can be bounded using Lemma~\ref{omd-fiex}. We thus need to upper-bound the expected value of $||T_{\rm stop} - M/\alpha||$. As $B^\tau$'s are Bernoulli($\alpha$) random variables, we expect that 
$T_{\rm stop}$ concentrates around $M/\alpha$. By using the multiplicative  Chernoff bound we have
\begin{align*}
\text{Pr} ( T_{\rm stop} \geq  M/\alpha + \beta) & = \text{Pr} (\sum_{\tau = 1}^{M/\alpha + \beta} B^\tau \leq M) \\
& \leq \text{Pr} (\sum_{\tau = 1}^{M/\alpha + \beta} B^\tau \leq (M + \alpha \beta)(1 - \frac{\alpha \beta}{M + \alpha \beta}) ) \\
& \leq \exp( - \frac{(\alpha \beta)^2} {3 (M + \alpha \beta)}).
\end{align*}
Similarly, we can show that
\begin{align*}
\text{Pr} ( T_{\rm stop} \leq  M/\alpha - \beta) \leq \exp( - \frac{(\alpha \beta)^2} {3 (M - \alpha \beta)}).
\end{align*}
We thus obtain,
\begin{align*}
\mathbb{E} [ |T_{\rm stop} - M/\alpha|] & \leq\sum_{j=0}^{M/\alpha} \text{Pr} (T_{\rm stop} \leq M/\alpha - j) +  \sum_{j=0}^{\infty} \text{Pr}(T_{\rm stop} \geq M/\alpha + j) \\
&\leq \sum_{j=0}^{M/\alpha} \exp(-\frac{(\alpha j)^2} {3 (M - \alpha j)} ) +  \sum_{j=0}^{\infty} \exp(- \frac{(\alpha j)^2} {3 (M + \alpha j)}) \\
& \leq 2 \sum_{j=0}^{\infty} \exp(- \frac{(\alpha j)^2} {3 (M + \alpha j)}) \\
& \leq 2 \sum_{k=0}^\infty \sqrt{M/\alpha^2} \exp(-\frac{M k^2}{3(M+k\sqrt{M})}) \\
&\leq 2 \sqrt{M/\alpha^2} \sum_{k=0}^\infty  \exp(-\frac{k}{6}) \\
& \leq 14 \sqrt{M/\alpha^2}.
\end{align*}
\end{proof}

{{\bfseries Step~3. Putting things together}\\  When the algorithm run by $\expert_j$ is using the doubling trick, for each round (with a block of size $M$), the algorithm needs to be given $M$ feedbacks (from the forecaster) until it switches to the next round (i.e. it doubles the block-length and restarts the algorithm).  Therefore, due to the fact that feedback from the forecaster is sent with independent probability $\alpha$,  the total time needed for the algorithm to switch to the next round is as the one given in Lemma~\ref{OMD-random}. As a result, the regret suffered in the current round is upper-bounded $O(\sqrt{M/\alpha^2})$ (from Lemma~\ref{OMD-random}). Note here that the time spent in each round to give $M$ feedbacks to the algorithm (i.e. $T_{\rm stop}$ in Lemma~\ref{OMD-random}) is roughly $M/\alpha$.  Now, assume that the total time taken by the algorithm is $T$. The algorithm (which is given feedback with probability $\alpha$ and plays according to the doubling trick) will be given feedback in $T\alpha$ time units (on average). As a result, it is not hard to see that total regret (after summing up over all the rounds played by the algorithm and using Jensen) becomes $\sqrt{T/\alpha}$. Hence, the proposition is proved also for the OMD algorithms with regret $O(\sqrt{T})$.

\section{Proof of Theorem~\ref{thm.regretbounds}}\label{appendix1_theorem1-proof}
In this section, we provide the proof of Theorem~\ref{thm.regretbounds} for the no-regret guarantees of our algorithm \algo. We follow a step by step approach, beginning with the bounds on external regret of \algo.

{{\bfseries Step~1. Bounds on external regret of \algo}}\\
By directly using the bounds of \expthree algorithm, cf. Theorem 3.1 from \cite{auer2002nonstochastic}, we can state the following bounds on the external regret of our algorithm \algo against any  expert $\expert_k$ where $k \in [\num]$. Note that these bounds given by the \emph{external} regret are only w.r.t. to the post hoc sequence of actions performed and losses observed during the execution of the algorithm

\begin{align}
\sum_{t=1}^{\Time}\E\bigg[\loss^t\Big(\pi_{i^t}(\context^t, \historySet^t_{{i^t},\algo})\Big)\bigg] - \E\bigg[\sum_{t=1}^{\Time} \loss^t\Big(\pi_k(\context^t, \historySet^{t}_{k, \algo})\Big)\bigg] \leq c \cdot \eta \cdot \Time + \frac{(\log\num) \cdot \num}{\eta}  \label{th3.step1}
\end{align}
where $c$ is a constant given by $c = e - 1$.

\vspace{2mm}
{{\bfseries Step2. No-regret and smooth learning dynamics of the experts}}\\
We note that during the execution of \algo in Algorithm~\ref{mainalgo}, we have sparse feedbacks whereby the experts receive feedback instances sporadically at rate defined by $\alpha = \frac{\eta}{\num}$, \emph{cf.} Section~\ref{sec.alg}. Hence, by definition, we have $\historySet^t_{j,\algo} \equiv \historySet^t_{j, \alpha} \ \forall j \in [\num]$ where $\alpha=\frac{\eta}{\num}$, \emph{cf.} Section~\ref{sec.alg}. 
By definition, the no-regret smooth learning dynamics of the expert $k$ guarantees:
\begin{align}
\E\bigg[\sum_{t=1}^{\Time} \loss^t\Big(\pi_k(\context^t, \historySet^{t}_{k, \algo})\Big)\bigg] - \E\bigg[\sum_{t=1}^{\Time} \loss^t\Big(\pi_k(\context^t, \historySet^{T}_{k, 1})\Big)\bigg] &\leq \bigObound\Big(\Time \cdot \Big(\alpha \cdot \Time\Big)^{\regretRate_k - 1} \Big)  \notag \\
&=   \bigObound\Big(\frac{\Time^{\regretRate_k} \cdot \num^{1 - \regretRate_k}}{\eta^{1 - \regretRate_k}}\Big), \label{th3.step2}
\end{align}
where $\regretRate_k$ is the parameter defining the rate of growth of regret, \emph{cf.} Section~\ref{sec.alg}.

\vspace{2mm}
{{\bfseries Step3. Putting it together}}\\
Let us rewrite the regret of the algorithm, copying from Equation~\ref{eq.objective}:
\begin{align}
\regret(\Time, \algo) \coloneqq \sum_{t=1}^{\Time}\E\bigg[\loss^t\Big(\pi_{i^t}(\context^t, \historySet^t_{{i^t},\algo})\Big)\bigg] - \min_{j \in [\num]} \E\bigg[\sum_{t=1}^{\Time} \loss^t\Big(\pi_j(\context^t, \historySet^{T}_{j, 1})\Big)\bigg] \label{th3.step3.a}
\end{align}

Combining Eq.\ref{th3.step1} and Eq.\ref{th3.step2} from above, and using the definition of $\regret$ from Equation~\ref{th3.step3.a}  above, we get:
\begin{align}
\regret(\Time, \algo) \leq \bigObound\Big(\eta \cdot \Time + \frac{(\log\num) \cdot \num}{\eta} +  \frac{\Time^{\regretRate_k} \cdot \num^{1 - \regretRate_k}}{\eta^{1 - \regretRate_k}}\Big) \label{th3.step3.b}
\end{align}

{{\bfseries Step4. Optimizing $\eta$}}\\
%
%
Next, we will optimize the value of $\eta$ in terms of $\Time$ and $\num$. Note that $\expert_k$ above corresponds to any expert. Hence, let us set $k = j^*$ where $j^*$ corresponds to the best expert $\expert_{j^*}$ that we want to compete against. As per assumptions of the theorem, the  best expert indeed has no-regret smooth learning dynamics with $\regretRate_{j^*} \in [0, 1]$. Stating this in terms $k=j^*$, we can write down the regret as follows:
\begin{align}
\regret(\Time, \algo) \leq \bigObound\Big(\eta \cdot \Time + \frac{(\log\num) \cdot \num}{\eta} +  \frac{\Time^{\regretRate_{j^*}} \cdot \num^{1 - \regretRate_{j^*}}}{\eta^{1 - \regretRate_{j^*}}}\Big) \label{th3.step4.a}
\end{align}

However, note that algorithm doesn't know $\regretRate_{j^*}$ and hence cannot directly optimize the value of $\eta$. As per the theorem statement, the \algo is invoked with input $\regretRate \in [0, 1]$ such that $\regretRate \geq \regretRate_{j^*}$.  



\vspace{2mm}
{{\bfseries Step4.1 Optimizing $\eta$ for known $\regretRate_{j^*}$}, \emph{i.e.}, $\regretRate = \regretRate_{j^*}$}\\
To begin with, let us first optimize $\eta$ for case when $\regretRate = \regretRate_{j^*}$. 
In order to find the optimal dependency of $\eta$ on $\Time$, we set $\eta \sim \Time^{-z}$, and the value $z$ will be found to minimize the external regret. By this choice of $\eta$,  the following terms stated as the powers of $\Time$ appear in \eqref{th3.step4.a}:
\begin{align}
 \{\Time^{1-z}, \Time^{z},  \Time^{z + \regretRate_{j^*}\cdot(1 - z)}\}
\end{align}
Solving for optimal value of $z$ to minimize the power of $\Time$ in the leading term, we get $z = \frac{1 - \regretRate_{j^*}}{2 - \regretRate_{j^*}}$.
%
%

Next, we find the optimal dependency of $\eta$ on $\num$. Note that, when $\regretRate = 0$, we have optimal dependency of $\eta$ on $\num$ as $(\num \cdot \log(\num))^{\frac{1}{2}}$. In general, the optimal dependency of $\eta$ to $N$ can be found by setting $\eta \sim \num^{z}$, which gives us from \eqref{th3.step4.a} the following terms stated as the powers of $\num$, where only the leading terms w.r.t. $\Time$ are kept:
\begin{align}
 \{\num^{z},  \num^{(1-\regretRate_{j^*})\cdot(1 - z)}\}
\end{align}
Solving for optimal value of $z$ to minimize the power of $\Time$ in the leading term, we get $z = \frac{1 - \regretRate_{j^*}}{2 - \regretRate_{j^*}}$.

For any $\regretRate_{j^*} \in [0, 1]$, we can thus write the optimal value of $\eta$ as:
\begin{align}
\eta = \Time^{-\frac{1 - \regretRate_{j^*}}{2 - \regretRate_{j^*}}} \cdot \num^{\frac{1 - \regretRate_{j^*}}{2 - \regretRate_{j^*}}} \cdot (\log\num)^{(\frac{1}{2} \cdot\indfunc_{\{\regretRate_{j^*} = 0\}})}
\end{align}
By keeping only the leading term of $\Time$, we can write this as follows:
\begin{align}
\regret(\Time, \algo) \leq  \bigObound\big(\Time^{\frac{1}{2 - \regretRate_{j^*}}} \cdot \num^{\frac{1}{2 - \regretRate_{j^*}}} \cdot (\log\num)^{(\frac{1}{2} \cdot\indfunc_{\{\regretRate_{j^*} = 0\}})} \big)
\end{align}

{{\bfseries Step4.2 Optimizing $\eta$ for unknown $\regretRate_{j^*}$}, \emph{i.e.}, $\regretRate \geq \regretRate_{j^*}$}\\
When $\regretRate_{j^*}$ is not known exactly,  and $\regretRate$ only upper bounds $\regretRate_{j^*}$, we can still optimize $\eta$ w.r.t. $\regretRate$ to get the same $\eta$ as stated above, replacing $\regretRate_{j^*}$ by $\regretRate_{j}$ (note that $1/(2-\beta)$ is increasing in $\beta$).
By keeping only the leading term of $\Time$, we can write the regret as follows:
\begin{align}
\regret(\Time, \algo) \leq  \bigObound\big(\Time^{\frac{1}{2 - \regretRate}} \cdot \num^{\frac{1}{2 - \regretRate}} \cdot (\log\num)^{(\frac{1}{2} \cdot\indfunc_{\{\regretRate = 0\}})} \big)
\end{align}

This gives us the desired bound stated in Theorem~\ref{thm.regretbounds}.

}

\end{document}